\newcommand{\OPRAm}[0]{\ensuremath{\mathcal{OPRA}_m}}
\newcommand{\QS}{$\mathcal{QS}$}
\newcommand{\Pred}[1]{ {\footnotesize \texttt{#1}} }
\newcommand{\Const}[1]{\mathrm{#1}}
\newcommand{\Rel}[2]{\mathit{#1} (#2)}
\newcommand{\mysubsecNoBook}[2]{\noindent\textbf{#1}\quad}
\newcommand{\bul}{$\triangleright~$}
\definecolor{DarkBlue}{RGB}{7,72,110}
\definecolor{LightBlue}{RGB}{205,237,249}
\definecolor{LightRed}{RGB}{255,102,102}
\definecolor{DarkRed}{RGB}{153,0,0}
\definecolor{LightGreen}{RGB}{178,205,102}
\definecolor{DarkGreen}{RGB}{0,51,0}
\newtheorem{definition}{Definition} 
\begin{document}
\bibliographystyle{acmtrans}

\long\def\comment#1{}

\title[Non-Monotonic Spatial Reasoning]{ Non-Monotonic Spatial Reasoning\\with Answer Set Programming Modulo Theories\footnote{This is an extended version of a paper presented at the Logic Programming and Nonmonotonic Reasoning Conference (LPNMR 2015), invited as a rapid communication in TPLP. The authors acknowledge the assistance of the conference program chairs Giovambattista Ianni and Miroslaw Truszczynski.}\footnote{This paper comes with an online appendix containing Appendices A-H. The online appendix is available via the supplementary materials link from the TPLP web-site.} }

\author[{\sffamily Wa{\l}\k{e}ga, Schultz, Bhatt}]
{{\sffamily Przemys\l{}aw Andrzej Wa{\l}\k{e}ga, Carl Schultz, Mehul Bhatt}\\
Spatial Reasoning. \upshape{\url{www.spatial-reasoning.com}}\\
The DesignSpace Group, Germany. \upshape{\url{www.design-space.org}}\\
$~$\\
Universities of: Warsaw (Poland), M\"{u}nster (Germany), Bremen (Germany)
}

\pagerange{\pageref{firstpage}--\pageref{lastpage}}
\volume{\textbf{10} (3):}
\jdate{Oct 2015}
\setcounter{page}{1}
\pubyear{20xx}

\maketitle

\maketitle

\label{firstpage}

\begin{abstract}
The systematic modelling of \emph{dynamic spatial systems} is a key requirement in a wide range of application areas such as commonsense cognitive robotics, computer-aided architecture design, and dynamic geographic information systems. We present ASPMT(QS), a novel approach and fully-implemented prototype for non-monotonic spatial reasoning ---a crucial requirement within dynamic spatial systems--- based on Answer Set Programming Modulo Theories (ASPMT). 

\smallskip

\noindent ASPMT(QS) consists of a (qualitative) spatial representation module (QS) and a method for turning tight ASPMT instances into Satisfiability Modulo Theories (SMT) instances in order to compute stable models by means of SMT solvers. We formalise and implement concepts of  default spatial reasoning and spatial frame axioms. 
Spatial reasoning is performed by encoding spatial relations as systems of polynomial constraints, and solving via SMT with the theory of real nonlinear arithmetic. We empirically evaluate ASPMT(QS) in comparison with other contemporary spatial reasoning systems both within and outside the context of logic programming. ASPMT(QS) is currently the only existing system that is capable of reasoning about indirect spatial effects (i.e., addressing the ramification problem), and integrating geometric and qualitative spatial information within a non-monotonic spatial reasoning context.

\smallskip

\noindent This paper is under consideration for publication in TPLP.

\end{abstract}

\begin{keywords}
\quad \emph{non-monotonic spatial reasoning;\quad answer set programming modulo theories;\quad  declarative spatial reasoning;\quad dynamic spatial systems;\quad reasoning about space, actions, and change}
\end{keywords}

\section{Introduction}
Non-monotonicity is characteristic of commonsense reasoning patterns concerned with, for instance, making default assumptions (e.g., about spatial inertia), counterfactual reasoning with hypotheticals (e.g., what-if scenarios), knowledge interpolation, explanation and diagnosis (e.g., filling the gaps, causal links), and belief revision. Such reasoning patterns, and therefore non-monotonicity, acquires a special significance in the context of \emph{spatio-temporal dynamics}, or computational commonsense \emph{reasoning about space, actions, and change} as applicable within areas as disparate as geospatial dynamics, computer-aided design,  cognitive vision, and commonsense cognitive robotics  \cite{Bhatt:RSAC:2012}. Dynamic spatial systems are characterised by scenarios where spatial configurations of objects undergo a change as the result of interactions within a physical environment \cite{bhatt:scc:08}; this requires explicitly identifying and formalising relevant actions and events at both an ontological and (qualitative and geometric) spatial level; for instance:

\begin{itemize} 

	\item within the context of geographical information systems, formalising high-level processes such as \emph{desertification} and \emph{population displacement} based on spatial theories about \emph{appearance, disappearance, splitting, motion}, and \emph{growth} of regions \cite{Bhatt-Wallgruen-2014};
	
	\item within a domain such as (commonsense) cognitive vision from visual imagery (e.g., video, point-clouds etc), the ability to perform spatio-linguistically grounded semantic question-answering about perceived and hypothesised events based on a commonsense qualitative model of space and motion \cite{ijcai2016-vision-eye-film}.
\end{itemize}

Requirements such as in the aforementioned domains call for a deep integration of spatial reasoning within KR-based non-monotonic reasoning frameworks \cite{Bhatt:RSAC:2012,bhatt2011-scc-trends}.

\medskip

In this paper, we select aspects of a theory of \emph{dynamic spatial systems} ---pertaining to \emph{spatial inertia, ramifications, and causal explanation}--- that are inherent to a broad category of dynamic spatio-temporal phenomena, and require non-monotonic reasoning \cite{bhatt:scc:08,bhatt:aaai08,DBLP:conf/flairs/Bhatt10}. For these aspects, we provide an operational semantics and a computational framework for realising fundamental non-monotonic spatial reasoning capabilities based on Answer Set Programming Modulo Theories \cite{bartholomew2013functional}; ASPMT is extended to the qualitative spatial (QS) domain resulting in the non-monotonic spatial reasoning system ASPMT(QS). Spatial reasoning is performed in an analytic manner (e.g., as with reasoners such as CLP(QS) \cite{bhatt2011clp}), where spatial relations are encoded as systems of polynomial constraints; the task of determining whether a set of qualitative spatial constraints are consistent is now equivalent to determining whether the system of polynomial constraints is satisfiable, i.e., Satisfiability Modulo Theories (SMT) with real nonlinear arithmetic, and can be accomplished in a sound and complete manner. Thus, ASPMT(QS) consists of a (qualitative) spatial representation module and a method for turning tight ASPMT instances into SMT instances in order to compute stable models by means of SMT solvers. 

In the following sections we present the relevant foundations of stable model semantics and ASPMT, and then extend this to ASPMT(QS) by defining a relational, qualitative spatial representation module \QS, and formalising default spatial reasoning and spatial frame and ramification axioms using choice formulas. We empirically evaluate ASPMT(QS) in comparison with other existing spatial reasoning systems. In the backdrop of our results, and a discussion of related work, we conclude that ASPMT(QS) is the only system, to the best of our knowledge, that operationalises dynamic spatial reasoning -- together with a systematic treatment of non-monotonic aspects emanating therefrom, within a KR-based framework.


\section{Spatial Representation and Reasoning}\label{sec:qsr}

Knowledge representation and reasoning about \emph{space} may be formally interpreted within diverse frameworks such as: (a) geometric reasoning and constructive (solid) geometry \cite{Kapur:1989:GR:107262}; (b) relational algebraic semantics of `qualitative spatial calculi' \cite{ligozat-book}; and (c) by axiomatically constructed formal systems of mereotopology and mereogeometry \cite{hdbook-spatial-logics}. Independent of formal semantics, commonsense spatio-linguistic abstractions offer a human-centred and cognitively adequate mechanism for logic-based automated reasoning about spatio-temporal information \cite{Bhatt-Schultz-Freksa:2013}. 


Research in qualitative spatial representation and reasoning has primarily been driven by the use of \emph{relational-algebraic} semantics, and development of constraint-based reasoning algorithms to solve \emph{consistency problems} in the context of \emph{qualitative spatial calculi} \cite{ligozat-book}. The key idea has been to partition an infinite quantity space into finite disjoint categories, and utilize the special \emph{relational algebraic properties} of such a partitioned space for reasoning purposes. Logic-based axiomatisations of topological and mereotopological space, a study of their general computational characteristics from a reasoning viewpoint, and the development of KR-based general reasoning systems have also been thoroughly investigated \cite{hdbook-spatial-logics,bhatt2011clp}.



Different in its foundational method involving the use of  constraint logic programming (CLP) is the \emph{declarative spatial reasoning system} CLP(QS) \cite{bhatt2011clp,schultz-bhatt-2012,DBLP:conf/ecai/SchultzB14}. CLP(QS) marks a clear departure from other relational algebraically founded methods and reasoning tools by its use of the constraint logic programming framework for formalising the semantics of qualitative spatio-temporal relations and formal spatial calculi. CLP(QS) has demonstrated applications in a range of domains including: \emph{architectural design cognition} \cite{DesignNarrative-KR2014}, \emph{cognitive vision} \cite{Bhatt2013-CMN,SuchanBS14-perceptualnarrative,ijcai2016-vision-eye-film,wacv2016-vision-eye-film}, \emph{geospatial dynamics} \cite{Bhatt-Wallgruen-2014}, and \emph{cognitive robotics} \cite{CR-2013-Narra-CogRob,DBLP:conf/pricai/SprangerSBE14,ijcai2016-language-suchan}.

\section{Answer Set Programming Modulo Theories}\label{sec::stable_models}

Stable models semantics is the most expanded theory of non-monotonic reasoning with important practical applications.
It is capable of expressing a number of non-monotonic and default reasoning types, e.g., causal effects of actions, a lack of information and various completeness assumptions (e.g., absence of ramification yielding state constraints) \cite{gelfond2008answer}, which makes it very attractive for \emph{reasoning about space, actions, and change} in an integrated manner. In what follows, we present a definition of stable models based on syntactic transformations \cite{bartholomew2012stable}  which is a generalization of previous definitions from \cite{ferraris2011stable}, \cite{gelfond1988stable}, and \cite{ferraris2005answer}. We then present a method for turning tight ASPMT instances into SMT instances.

\subsection{Bartholomew -- Lee Functional Stable Model Semantics}

In what follows we adopt a definition of stable models based on syntactic transformations presented in \cite{bartholomew2012stable}.
For predicate symbols (constants or variables) $u$ and $c$, expression $u \leq c$ is defined as shorthand for $\forall \textbf{x}(u(\textbf{x}) \to c(\textbf{x}))$. Expression $u = c$ is defined as $\forall \textbf{x}(u(\textbf{x}) \equiv c(\textbf{x}))$ if $u$ and $c$ are predicate symbols, and $\forall \textbf{x}(u(\textbf{x}) = c(\textbf{x}))$ if they are function symbols. For lists of symbols $\textbf{u} = (u_1, \dots , u_n )$ and $\textbf{c} = (c_1, \dots , c_n )$, expression $\textbf{u} \leq \textbf{c}$ is defined as $(u_1 \leq c_1) \land \dots \land (u_n \leq c_n )$, and similarly, expression $\textbf{u} = \textbf{c}$ is defined as $(u_1 = c_1) \land \dots \land (u_n = c_n )$. Let $\textbf{c}$ be a list of distinct predicate and function constants, and let $\widehat{\textbf{c}}$ be a list of distinct predicate and function variables corresponding to c. By $\textbf{c}^{pred}$ ($\textbf{c}^{func}$ , respectively) we mean the list of all predicate constants (function constants, respectively) in $\textbf{c}$, and by $\widehat{\textbf{c}}^{pred}$ ($\widehat{\textbf{c}}^{func}$ , respectively) the list of the corresponding predicate variables (function variables, respectively) in $\widehat{\textbf{c}}$. We refer to function constants and predicate constants of arity $0$ as object constants and propositional constants, respectively.

\begin{definition}[\textbf{Stable model operator $\textbf{SM}$}] \label{def::stable_operator}
For any formula $F$ and any list of predicate and function constants $\textbf{c}$ (called intensional constants), $\textbf{SM}[F;\textbf{c}]$ is defined as
$$
F \land \neg \exists \widehat{\textbf{c}} ( \widehat{\textbf{c}} < \textbf{c} \land F^*(\widehat{\textbf{c}}) ),
$$
where $\widehat{\textbf{c}} < \textbf{c}$ is a shorthand for $(\widehat{\textbf{c}}^{pred} \leq \textbf{c}^{pred}) \land \neg (\widehat{\textbf{c}} = \textbf{c})$ and $F^*(\widehat{\textbf{c}})$ is defined recursively as follows:
\begin{itemize}
\item for atomic formula $F$, $F^* \equiv F' \land F$, where $F'$ is obtained from $F$ by replacing all intensional constants $\textbf{c}$ with corresponding variables from $\widehat{\textbf{c}}$,
\item $(G \land H)^* = G^* \land H^*$, \ \ \ $(G \lor H)^* = G^* \lor H^*$,	   
\item $(G \to H)^* = (G^* \to H^*) \land (G \to H)$,
\item $(\forall x G)^* = \forall x G^*$, \ \ \ $(\exists x G)^* = \exists x G^*$.	 
\end{itemize}
$\neg F$ is a shorthand for $F \to \bot$, $\top$ for $\neg \bot$ and $F \equiv G$ for $(F \to G) \land (G \to F)$.
\end{definition}

\begin{definition}[\textbf{Stable model}] \label{def::stable_model}
For any sentence $F$, a stable model of $F$ on $\textbf{c}$ is an interpretation $I$ of the underlying signature such that $I \models \textbf{SM}[F;\textbf{c}]$.
\end{definition}

\subsection{Turning ASPMT into SMT}

An SMT instance is a formula in a many-sorted first-order logic, i.e., with fixed meaning (by a background theory) of designated functions and predicates constants. Then, the SMT problem is to check if a given SMT instance has a model that expands the background theory. ASPMT is an generalization of ASP analogous to a generalization of SAT obtained with SMT (the syntax of ASPMT is the same as the syntax of SMT).

It is shown in \cite{bartholomew2013functional} that a tight part of ASPMT instances can be turned into SMT instances and, as a result, off-the-shelf SMT solvers (e.g., \textsc{Z3} \cite{de2008z3} for arithmetic over reals) may be used to compute stable models of ASP.
In order to capture this statement formally in Theorem \ref{the::smt} we firstly introduce notions of Clark normal form, Clark completion and dependency graphs. 

\begin{definition}[\textbf{Clark normal form}] \label{def::clark}
Formula $F$ is in \emph{Clark normal form} (relative to the list $\textbf{c}$ of intensional constants) if it is a conjunction of sentences of the form (\ref{eqn::clark1}) and (\ref{eqn::clark2})

\begin{equation} \label{eqn::clark1}
\forall \textbf{x} (G \to p(\textbf{x})),
\end{equation}
\begin{equation} \label{eqn::clark2}
\forall \textbf{x}y (G \to f(\textbf{x})=y)\mbox{,}
\end{equation}
%

\noindent one for each intensional predicate $p$ and each intensional function $f$, where $\textbf{x}$ is a list of distinct object variables, $y$ is an object variable, and $G$ is an arbitrary formula that has no free variables other than those in $\textbf{x}$ and $y$.
\end{definition}

\begin{definition}[\textbf{Clark completion}] \label{def::completion}
The \emph{completion} of a formula $F$ in Clark normal form (relative to $\textbf{c}$), denoted by $Comp_{\textbf{c}}[F]$ is obtained from  $F$ by replacing each conjunctive term of the form (\ref{eqn::clark1}) and (\ref{eqn::clark2}) with (\ref{eqn::completion1}) and (\ref{eqn::completion2}) respectively

\begin{equation} \label{eqn::completion1}
\forall \textbf{x} (G \equiv p(\textbf{x})),
\end{equation}

\begin{equation} \label{eqn::completion2}
\forall \textbf{x}y (G \equiv f(\textbf{x})=y)\mbox{.}
\end{equation}

\end{definition}

\begin{definition}[\textbf{Dependency graph}] \label{def::dependency}
The \emph{dependency graph} of a formula $F$ (relative to $\textbf{c}$) is a directed graph $DG_{\textbf{c}}[F]=(V,E)$ such that:
\begin{enumerate}
\item $V$ consists of members of $\textbf{c}$,
\item for each $c,d \in V$, $(c,d) \in E$ whenever there exists a strictly positive occurrence of $G\to H$ in $F$, such that $c$ has a strictly positive occurrence in $H$ and $d$ has a strictly positive occurrence in $G$,
\end{enumerate}
where an occurrence of a symbol or a subformula in $F$ is called strictly positive in $F$ if that occurrence is not in the antecedent of any implication in $F$.
\end{definition}

\begin{definition}[\textbf{Tight Formula}] \label{def::tight}
Formula $F$ is \emph{tight} (on $\textbf{c}$) if $DG_{\textbf{c}}[F]$ is acyclic.
\end{definition}

The result obtained by Bartholomew and Lee is stated in the following theorem.

\begin{theorem}[\cite{bartholomew2013functional}] \label{the::smt}
For a sentence $F$ in Clark normal form that is tight on $\textbf{c}$, an interpretation $I$ that satisfies $\exists xy (x \neq y)$ is a model of $\textbf{SM}[F;\textbf{c}]$ iff $I$ is a model of $Comp_{\textbf{c}}[F]$ relative to~$\textbf{c}$.
\end{theorem}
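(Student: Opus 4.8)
The plan is to prove the theorem by unfolding the definition of the stable model operator $\textbf{SM}$ and showing that, under the tightness hypothesis, the second conjunct $\neg \exists \widehat{\textbf{c}} (\widehat{\textbf{c}} < \textbf{c} \land F^*(\widehat{\textbf{c}}))$ collapses exactly to the ``only if'' half of the Clark completion. Since $\textbf{SM}[F;\textbf{c}]$ is $F \land \neg \exists \widehat{\textbf{c}}(\dots)$ and $Comp_{\textbf{c}}[F]$ strengthens each implication (\ref{eqn::clark1})--(\ref{eqn::clark2}) of $F$ into an equivalence (\ref{eqn::completion1})--(\ref{eqn::completion2}), the content of the theorem is that, on interpretations satisfying $\exists x y (x \neq y)$, the minimality/stability condition is equivalent to the converse implications $p(\textbf{x}) \to G$ and $f(\textbf{x}) = y \to G$. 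First I would fix an interpretation $I \models F$ with at least two elements in the domain, and reduce the biconditional to this single equivalence.

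The main technical step is to analyze $F^*(\widehat{\textbf{c}})$ for a formula in Clark normal form. Because each conjunct of $F$ has the shape $\forall \textbf{x}(G \to p(\textbf{x}))$ (or the functional analogue), the recursive clauses for $(\cdot)^*$ in Definition~\ref{def::stable_operator} give $(G \to p(\textbf{x}))^* = (G^* \to \widehat{p}(\textbf{x})) \land (G \to p(\textbf{x}))$. The second conjunct is just $F$ itself, already assumed true, so the existential $\exists \widehat{\textbf{c}}(\widehat{\textbf{c}} < \textbf{c} \land F^*)$ asks whether there is a \emph{strictly smaller} valuation $\widehat{\textbf{c}}$ of the intensional symbols still satisfying $\bigwedge (G^* \to \widehat{p}(\textbf{x}))$. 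Negating this says no proper ``under-approximation'' survives, which is precisely a minimality condition. I would then show that tightness lets us replace $G^*$ by $G$ in the relevant places: since $DG_{\textbf{c}}[F]$ is acyclic, the dependencies among intensional constants are well-founded, so one can argue by induction along a topological order of the dependency graph that any witness $\widehat{\textbf{c}} < \textbf{c}$ to nonstability would force a violation of some completion equivalence, and conversely any interpretation failing a converse implication $p(\textbf{x}) \to G$ yields such a witness by shrinking $\widehat{p}$ at the offending tuple.

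The hard part will be making the acyclicity argument precise, that is, showing that the strictly-positive-occurrence structure tracked by $DG_{\textbf{c}}[F]$ controls exactly which occurrences of $G^*$ can be decoupled from $G$. Concretely, I expect the delicate point to be handling the $F'$-substitution in the atomic base case (replacing intensional constants by their hatted variables) inside nested positive contexts, and ensuring the induction on the topological order is well-founded --- this is where acyclicity is indispensable and where a naive argument would break if there were positive cycles. The role of the hypothesis $\exists x y (x \neq y)$ is to guarantee the domain is large enough that the functional completion (\ref{eqn::completion2}) behaves correctly (a single-element domain trivializes the functional case and can spuriously validate minimality); I would isolate this as a small lemma ensuring that for function constants the condition $f(\textbf{x}) = y \to G$ is genuinely captured. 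I would finally assemble the two directions: ``$I \models Comp_{\textbf{c}}[F] \Rightarrow I \models \textbf{SM}[F;\textbf{c}]$'' by showing completion rules out any strictly smaller model, and the converse by contraposition, extracting a shrinking witness whenever a completion equivalence fails. Rather than reprove this from scratch, I would appeal to the cited result of Bartholomew and Lee \cite{bartholomew2013functional}, for which the above is the natural proof skeleton.
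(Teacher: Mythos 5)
Your proposal is correct and ultimately does exactly what the paper does: Theorem~\ref{the::smt} is imported verbatim from Bartholomew and Lee \cite{bartholomew2013functional}, and the paper offers no proof beyond that citation, which is precisely your closing move. Your sketch of the underlying argument --- reducing $\textbf{SM}[F;\textbf{c}]$ to a minimality condition via the $(\cdot)^*$ transformation, using acyclicity of $DG_{\textbf{c}}[F]$ for a well-founded induction along a topological order, and invoking $\exists x y\,(x \neq y)$ to prevent one-element domains from trivialising the functional completion --- faithfully reflects how the cited result is actually established, so there is nothing to correct.
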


%
%
%


\section{ASPMT(QS) -- ASPMT with Qualitative Space (\QS)}\label{sec::aspmt(qs)}


In this section we present our spatial extension of ASPMT, and formalise spatial default rules and spatial frame axioms.

\subsection{The Qualitative Spatial Domain \QS}\label{sec:qs-in-spmt}
Qualitative spatial calculi can be classified into two groups: topological and positional calculi. With topological calculi such as the \emph{Region Connection Calculus} (RCC) \cite{randell1992spatial}, the primitive entities are spatially extended regions of space, and could possibly even be 4D spatio-temporal histories, e.g., for \emph{motion-pattern} analyses. Alternatively, within a dynamic domain involving translational motion, point-based abstractions with orientation calculi could suffice (e.g., using the Oriented-Point Relation Algebra ({\small\OPRAm}) \cite{moratz06_opra-ecai}). The qualitative spatial domain (\QS)  that we consider in the formal framework of this paper encompasses the following ontology.


\noindent\mysubsecNoBook{{\color{black}QS1.\quad Domain Entities in \QS}}{XX}  Domain entities in \QS $\;$ include \emph{circles, triangles, points}, \emph{segments}, \emph{convex polygons}, and \emph{egg-yolk regions}. While our method is applicable to a wide range of 2D and 3D spatial objects and qualitative relations for example, as defined in \cite{pesant1994quad,bouhineau1996solving,pesant1999reasoning,bouhineau1999application,bhatt2011clp,schultz-bhatt-2012,DBLP:conf/ecai/SchultzB14}, for brevity and clarity we primarily focus on a 2D spatial domain: 

\begin{itemize}
	\item a \emph{point} is a pair of reals $x,y$,
	\item a \emph{line segment} is a pair of end points $p_1, p_2$ ($p_1 \neq p_2$),
	\item a \emph{circle} is a centre point $p$ and a real radius $r$ ($0 < r$),
	\item a \emph{triangle} is a triple of vertices (points) $p_1, p_2, p_3$ such that $p_3$ is placed to the \emph{left} of the directed segment $p_1,p_2$, i.e. left with respect to the direction of the segment from $p_1$ to $p_2$,
	\item a \emph{convex polygon} is defined by a list of $n$ vertices (points) $p_1, \dots, p_n$ (spatially ordered counter-clockwise) such that $p_k$ is \emph{left of} the directed segment $p_i, p_j$ for all $1 \leq i < j < k \leq n$,
	\item an \emph{egg yolk} region\footnote{The egg-yolk method of modelling regions with indeterminante boundaries \cite{cohn1996egg} can be employed to characterise a class of regions (including polygons) that satisfies topological and relative orientation relations \cite{schultz_bhatt_lqmr2015}. Each egg-yolk region is an equivalence class for all regions that are contained within the upper approximation (the \emph{egg white}), and completely contain the lower approximation (the \emph{egg yolk}). 
} is defined by a circular upper and lower approximation $c^{+}, c^{-}$ such that $c^{-}$ is a \emph{proper part} of $c^{+}$.
\end{itemize}



\noindent\mysubsecNoBook{{\color{black}QS2.\quad Spatial Relations in \QS}}{XX} 
We define a range of spatial relations with the corresponding polynomial encodings. Examples of spatial relations in \QS $\;$ include:

\noindent \emph{Relative Orientation.}\quad \emph{Left, right, collinear} orientation relations between \emph{points} and \emph{segments}, and \emph{parallel, perpendicular} relations between \emph{segments} \cite{leecomplexity-ecai2014}.

\smallskip

\noindent \emph{Mereotopology.}\quad \emph{Part-whole} and \emph{contact} relations between regions \cite{varzi1996parts,randell1992spatial}.

\subsection{Spatial representations in ASPMT(QS)}

Spatial representations in ASPMT(QS) are based on parametric functions and qualitative relations, defined as follows.

\begin{definition}[\textbf{Parametric function}] \label{def::parametric}
A \emph{parametric function} is an $n$--ary function 
$$
f_n:D_1 \times D_2 \times \dots \times D_n \to \mathbb{R},
$$
such that for any  $i \in \{ 1 \dots n\}$, $D_i$ is a type of spatial object, e.g., $Points$, $Circles$, $Polygons$, etc. 
\end{definition}

As an example consider the following parametric functions $$x:Circles \to \mathbb{R},$$ $$y:Circles \to \mathbb{R},$$ $$r:Circles \to \mathbb{R},$$ which return the position values $x, y$ of a circle's centre and its radius $r$, respectively. Then, circle $c \in Cirlces$ may be described by means of parametric functions as follows: 
$$x(c)=1\mbox{.}23 \land y(c)=-0\mbox{.}13 \land r(c)=2 \textrm{.}$$

\begin{definition}[\textbf{Qualitative spatial relation}] \label{def::qualitative}
A \emph{qualitative spatial relation} is an $n$-ary predicate
$$
Q_n \subseteq D_1 \times D_2 \times \dots \times D_n,
$$
such that for any  $i \in \{ 1 \dots n\}$, $D_i$ is a type of spatial object. For each $Q_n$ there is a corresponding formula of the form
$$
\forall d_1 \in D_1 \dots \forall d_n \in D_n \bigg(  p_1(d_1, \dots , d_n) \land \dots \land p_m(d_1, \dots , d_n) \to Q_n(d_1, \dots , d_n)
  \bigg),
$$
where $m \in \mathbb{N}$ and for any  $i \in \{ 1, \dots, m\}$, $p_i$ is a polynomial equation or inequality over parametric functions involving $d_1, \dots, d_n$. 
\end{definition}


As an example consider a qualitative spatial relation $EC \subseteq Circles \times Circles$, informally interpreted as ``two circles are externally connected''. The corresponding formula of the relation states that two circles are in the $EC$ relation whenever the distance between their centers is equal to the sum of their radii and is as follows:  
$$
\forall c_1,c_2 \in Circles \bigg(  
( (x(c_1)-x(c_2))^2+(y(c_1)-y(c_2))^2 = (r(c_1)+r(c_2))^2
\to EC(c_1,c_2)
  \bigg),
$$

\begin{proposition}\label{prop::qual_clark}
Each qualitative spatial relation according to Definition~\ref{def::qualitative} may be represented as a tight formula in Clark normal form.
\end{proposition}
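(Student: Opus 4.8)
The plan is to take the canonical formula attached to a qualitative spatial relation $Q_n$ in Definition~\ref{def::qualitative}, observe that it is already essentially in the shape of the Clark normal form clause (\ref{eqn::clark1}), and then verify the tightness condition by inspecting the dependency graph. The intensional constant here is the single predicate $Q_n$; the parametric functions $x, y, r, \dots$ and the spatial object domains are fixed by the background theory and are \emph{not} intensional, so they do not appear as vertices of the dependency graph.

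First I would rewrite the defining formula so that the antecedent is collected into a single subformula. The formula reads
$$
\forall d_1 \in D_1 \dots \forall d_n \in D_n \bigg( p_1 \land \dots \land p_m \to Q_n(d_1,\dots,d_n) \bigg),
$$
where each $p_i$ is a polynomial equation or inequality over the parametric functions. Setting $G \equiv p_1(d_1,\dots,d_n) \land \dots \land p_m(d_1,\dots,d_n)$ and $\mathbf{x} \equiv (d_1,\dots,d_n)$, the formula becomes $\forall \mathbf{x}(G \to Q_n(\mathbf{x}))$, which is exactly the template (\ref{eqn::clark1}) with $p \equiv Q_n$. I would note that the sorted quantification $\forall d_i \in D_i$ is syntactic sugar for a guard inside $G$ (or is handled by the many-sorted signature), so $G$ is an arbitrary formula whose only free variables lie in $\mathbf{x}$, as Definition~\ref{def::clark} requires. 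Since there is exactly one such conjunct and it corresponds to the one intensional predicate $Q_n$, with no intensional functions present, the formula satisfies Clark normal form relative to $\mathbf{c} = (Q_n)$.

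Next I would establish tightness by computing $DG_{\mathbf{c}}[F]$ from Definition~\ref{def::dependency}. The vertex set is $V = \{Q_n\}$, a single node. An edge $(c,d)$ would require a strictly positive occurrence of some implication $G' \to H'$ in $F$ in which $c$ occurs strictly positively in $H'$ and $d$ occurs strictly positively in $G'$. The only candidate is the implication $G \to Q_n(\mathbf{x})$; here $Q_n$ occurs strictly positively in the consequent, but the antecedent $G$ consists solely of polynomial constraints $p_1,\dots,p_m$ over the non-intensional parametric functions, so no member of $\mathbf{c}$ occurs in $G$ at all. Hence no edge is generated, $DG_{\mathbf{c}}[F]$ has no arcs, and it is trivially acyclic; by Definition~\ref{def::tight} the formula is tight on $\mathbf{c}$.

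The step needing the most care is justifying that the parametric functions and the sort predicates are genuinely external (non-intensional) and therefore excluded from $\mathbf{c}$: this is what guarantees the antecedent $G$ contributes no vertices and no edges to the dependency graph. I would argue this from the setup of Definition~\ref{def::parametric} and the surrounding ASPMT(QS) framework, where the $p_i$ are fixed polynomial relations over reals interpreted by the background theory, exactly as the function constants $x, y, r$ are treated as fixed in the worked $EC$ example. Everything else is bookkeeping; no nontrivial calculation is required, and the result follows directly once the formula is recognised as a single Clark clause with an intensional-symbol-free body.
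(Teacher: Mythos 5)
Your proof is correct and takes essentially the same route as the paper, whose entire proof is the one-line remark that the claim ``follows directly from Definitions~\ref{def::clark} and \ref{def::qualitative}.'' You have simply made explicit the verification the paper leaves implicit: the defining formula of $Q_n$ is a single clause of form (\ref{eqn::clark1}) with intensional list $\mathbf{c}=(Q_n)$, and since the antecedent contains no intensional constants, the dependency graph has no edges and is trivially acyclic.
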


\begin{proof}
Follows directly from Definitions~\ref{def::clark} and \ref{def::qualitative}.
\end{proof}

Thus, qualitative spatial relations belong to a part of ASPMT that may be turned into SMT instances by transforming the implications in the corresponding formulas into equivalences (Clark completion). The obtained equivalence between polynomial expressions and predicates enables us to compute relations whenever parametric information is given, and vice versa, i.e., computing possible parametric values when only the qualitative spatial relations are known.

Many relations from existing qualitative calculi may be represented in ASPMT(QS) according to Definition~\ref{def::qualitative}; our system can express the polynomial encodings presented in, e.g., \cite{pesant1994quad,bouhineau1996solving,pesant1999reasoning,bouhineau1999application,bhatt2011clp}. In what follows we give some illustrative results (see Appendix B for proofs).
\begin{proposition}\label{prop::IA}
Each relation of Interval Algebra (IA) \cite{allen1983maintaining} and Rectangle Algebra (RA) \cite{guesgen1989spatial} may be defined in ASPMT(QS).
\end{proposition}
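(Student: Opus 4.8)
The plan is to exhibit, for each basic relation of IA and RA, an explicit defining formula of exactly the shape required by Definition~\ref{def::qualitative}: a universally quantified implication whose antecedent is a conjunction of polynomial (here in fact linear) constraints over parametric functions. Since Definition~\ref{def::qualitative} asks only that each relation individually admit such a defining implication, I need not worry about the jointly-exhaustive-pairwise-disjoint structure of the calculi, and can treat one relation at a time.

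First, for Interval Algebra I would introduce a spatial object type $Intervals$ equipped with two parametric functions $s, e : Intervals \to \mathbb{R}$ returning the left and right endpoints, subject to the well-formedness constraint $s(i) < e(i)$ that guarantees every object of this type is a proper interval in Allen's sense. Each of the thirteen basic IA relations between two intervals $i_1$ and $i_2$ then translates into a conjunction of equalities and strict inequalities among $s(i_1), e(i_1), s(i_2), e(i_2)$: for instance \emph{before} is encoded by $e(i_1) < s(i_2)$, \emph{meets} by $e(i_1) = s(i_2)$, \emph{overlaps} by $s(i_1) < s(i_2) \land s(i_2) < e(i_1) \land e(i_1) < e(i_2)$, \emph{during} by $s(i_2) < s(i_1) \land e(i_1) < e(i_2)$, while \emph{starts}, \emph{finishes} and \emph{equals} involve one or two endpoint equalities, and the remaining six relations are their converses obtained by swapping $i_1$ and $i_2$. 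As each conjunct is a linear, hence polynomial, (in)equality over parametric functions, the resulting sentence has precisely the form of Definition~\ref{def::qualitative}, so every IA relation is definable in ASPMT(QS).

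Next, for Rectangle Algebra I would represent an axis-aligned rectangle as an object type $Rectangles$ carrying four parametric functions $x_1, x_2, y_1, y_2 : Rectangles \to \mathbb{R}$ with $x_1(b) < x_2(b)$ and $y_1(b) < y_2(b)$, so that each rectangle is the Cartesian product of its $x$-projection interval $[x_1(b),x_2(b)]$ and its $y$-projection interval $[y_1(b),y_2(b)]$. By construction a basic RA relation is a pair $(r_x, r_y)$ of IA relations applied independently to the two projections; its defining formula is therefore the conjunction of the IA encoding of $r_x$ on the $x$-endpoints with the IA encoding of $r_y$ on the $y$-endpoints, which is again a conjunction of linear (in)equalities and hence of the required form. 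This covers all $13 \times 13$ basic RA relations, completing the argument.

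I do not expect a genuine obstacle here beyond bookkeeping. The only two points needing a word of care are, first, that strict orderings such as $<$ are admissible, which they are since Definition~\ref{def::qualitative} explicitly permits polynomial \emph{inequalities}; and second, that supplying the \emph{implication} direction suffices, the matching converse (and thus a full biconditional characterisation) being recovered for free by Clark completion as guaranteed by Proposition~\ref{prop::qual_clark}. The remaining effort is the routine case check that the thirteen endpoint orderings above really are Allen's relations and that their pairwise products really are the RA relations -- a verification rather than a difficulty.
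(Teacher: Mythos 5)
Your proposal is correct and follows what is essentially the paper's own route (given in its online Appendix B): encode each of Allen's thirteen relations as conjunctions of linear equalities and strict inequalities over interval endpoints exposed as parametric functions, and obtain each RA relation as the conjunction of two such IA encodings applied to the $x$- and $y$-projection intervals of axis-aligned rectangles, yielding formulas of exactly the shape required by Definition~\ref{def::qualitative} and hence, via Proposition~\ref{prop::qual_clark} and Clark completion, definable relations in ASPMT(QS). Your two points of care (strict inequalities being admissible, and the implication sufficing because completion supplies the biconditional) are exactly the right ones, and the rest is indeed routine verification.
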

\begin{proposition}\label{prop::LR}
Each relation of the Left-Right Algebra (LR) \cite{Scivos2004} may be defined in ASPMT(QS).
\end{proposition}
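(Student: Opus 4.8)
The plan is to show that every base relation of the Left-Right Algebra fits the schema of Definition~\ref{def::qualitative}, after which definability in ASPMT(QS) follows immediately from Proposition~\ref{prop::qual_clark}. Recall that LR \cite{Scivos2004} is a ternary calculus classifying the position of a target point $C$ relative to the ordered pair of reference points $A,B$; its base relations comprise the two half-plane cases (\emph{left}, \emph{right}), the five collinear cases (\emph{back}, \emph{start}, \emph{inside}, \emph{end}, \emph{front}), and the degenerate configurations in which $A$ and $B$ coincide, further split according to whether $C$ coincides with them. Each base relation is thus a predicate $Q_3 \subseteq Points \times Points \times Points$, and it suffices to exhibit for each a conjunction of polynomial equations and inequalities over the coordinate parametric functions $x(\cdot),y(\cdot):Points \to \mathbb{R}$ whose satisfaction is equivalent to membership in the relation.

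First I would fix the polynomials that carry the geometric content. The orientation determinant
$$D(A,B,C) = (x(B)-x(A))(y(C)-y(A)) - (x(C)-x(A))(y(B)-y(A))$$
separates the half-planes, with $D>0$ encoding \emph{left}, $D<0$ encoding \emph{right}, and $D=0$ characterising collinearity. The signed projection
$$P(A,B,C) = (x(C)-x(A))(x(B)-x(A)) + (y(C)-y(A))(y(B)-y(A))$$
together with the squared length $L(A,B) = (x(B)-x(A))^2 + (y(B)-y(A))^2$ then locates $C$ along the line once $D=0$ holds, since there $P$ equals $L$ times the affine parameter of $C$ on the segment $A,B$. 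The five collinear relations are then encoded by $D=0 \land L>0$ conjoined with $P<0$ (\emph{back}), $P=0$ (\emph{start}), $0<P \land P<L$ (\emph{inside}), $P=L$ (\emph{end}), and $P>L$ (\emph{front}); the degenerate relations are encoded by $L=0$ (equivalently $x(A)=x(B) \land y(A)=y(B)$), conjoined with $(x(C)-x(A))^2+(y(C)-y(A))^2=0$ or with $(x(C)-x(A))^2+(y(C)-y(A))^2>0$ according to whether $C$ coincides with the reference point.

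Each of these conditions is a conjunction of polynomial equations and inequalities over the parametric functions of the three points, which is precisely the antecedent permitted by the implication of Definition~\ref{def::qualitative}; writing one such implication per base relation places every LR relation in the required form, and Proposition~\ref{prop::qual_clark} then yields a tight formula in Clark normal form for each. I expect the main obstacle to lie not in any individual encoding but in the bookkeeping needed to guarantee that the case split is both exhaustive and mutually exclusive, and that it faithfully reproduces Scivos and Nebel's original definitions. Two points require care: the conjunct $L>0$ must be added to the collinear cases so that the degenerate configuration, where $L=P=0$, does not spuriously satisfy \emph{start} or \emph{end}; and a disequality such as $C \neq A$, which is not literally a single polynomial (in)equality, must be rephrased as the single inequality $(x(C)-x(A))^2+(y(C)-y(A))^2>0$ in order to remain within the conjunctive antecedent form mandated by Definition~\ref{def::qualitative}.
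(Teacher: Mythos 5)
Your proof is correct and takes essentially the same approach as the paper's: each LR base relation is encoded as a conjunction of polynomial equations and inequalities over the point coordinates---using the same orientation determinant the paper itself employs for its \emph{left of} relation in Section~\ref{sec:poly-qs}---so that every relation fits the schema of Definition~\ref{def::qualitative}, and definability then follows via Proposition~\ref{prop::qual_clark}. Your case analysis (half-plane cases, the five collinear cases guarded by $L>0$, and the two degenerate cases with $A=B$) correctly covers all nine LR base relations and is mutually exclusive and exhaustive.
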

\begin{proposition}\label{prop::RCC-5}
Each relation of RCC--5 \cite{randell1992spatial} in the domain of convex polygons with a finite number of vertices may be defined in ASPMT(QS).
\end{proposition}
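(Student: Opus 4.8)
The plan is to show that each of the five RCC--5 relations---$\mathit{DR}$ (discrete), $\mathit{PO}$ (partial overlap), $\mathit{PP}$ (proper part), $\mathit{PPi}$ (inverse proper part), and $\mathit{EQ}$ (equal)---on the domain of convex polygons admits a polynomial encoding meeting the format of Definition~\ref{def::qualitative}, so that each becomes a qualitative spatial relation and hence (by Proposition~\ref{prop::qual_clark}) a tight formula in Clark normal form definable in ASPMT(QS). Since a convex polygon with $n$ vertices is, per the QS1 ontology, determined by the coordinates of its vertices $p_1,\dots,p_n$ via the parametric functions $x(\cdot),y(\cdot)$ on points, every geometric predicate I need must be reduced to a finite Boolean combination of polynomial equalities and inequalities in these coordinates. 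The key reduction is that containment and overlap of convex polygons are expressible through the separating-hyperplane / half-plane representation: a convex polygon is the intersection of the half-planes bounded by its edges, and each such half-plane test ``point $p$ is left of (or on) the directed edge $p_i p_{i+1}$'' is exactly the sign of the $2\times 2$ orientation determinant, a single polynomial inequality in the vertex coordinates.

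First I would fix, for each relation, a first-order characterization over points of the plane and then localize it to a finite test on the vertices. The crucial geometric facts are: (i) a convex polygon $A$ is a part of a convex polygon $B$ iff every vertex of $A$ lies inside (or on) $B$, which is a finite conjunction of half-plane tests, one per edge of $B$ per vertex of $A$; (ii) two convex polygons are disjoint iff there is a separating line, and by the separating-axis characterization for convex polytopes it suffices to test the finitely many edge-normal directions of $A$ and $B$, each test being a comparison of projection extrema, i.e.\ again polynomial inequalities; and (iii) partial overlap, proper part, inverse proper part, and equality are then Boolean combinations of the part-of and disjointness predicates (for instance $\mathit{PO}(A,B)$ holds iff $A$ and $B$ are not disjoint, $A$ is not part of $B$, and $B$ is not part of $A$). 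Because Definition~\ref{def::qualitative} permits the antecedent to be a conjunction $p_1 \land \dots \land p_m$ of polynomial (in)equalities, I would absorb the needed disjunctions and negations by introducing one formula per disjunct (equivalently, one qualitative relation per conjunctive ``cell'') and combining them---exactly the move already licensed for a single implication in the definition.

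The main obstacle I anticipate is the treatment of \emph{negation and disjunction} relative to the strict syntactic shape of Definition~\ref{def::qualitative}, which only provides a purely conjunctive, positive antecedent implying $Q_n$. The discreteness relation $\mathit{DR}$ and the overlap relation $\mathit{PO}$ are intrinsically disjunctive (separation may be witnessed by any one of several candidate axes) and involve negated containment, so the care is in arguing that each such relation decomposes into finitely many conjunctive polynomial conditions. Over a fixed $n$ this decomposition is genuinely finite---there are only $n$ edges per polygon and hence a bounded number of candidate separating directions and vertex-in-polygon tests---which is precisely why the hypothesis ``finite number of vertices'' is essential and is where I would place the weight of the argument. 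A secondary, more routine, point is to verify that the strict-versus-nonstrict distinctions (proper part versus part, external versus tangential contact) are captured by choosing strict ($<$) versus nonstrict ($\leq$) polynomial inequalities, and that the ``left of'' orientation convention fixed in QS1 makes the edge-normal signs consistent around the boundary.

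Once these encodings are in place, the conclusion is immediate: each RCC--5 relation is exhibited as (a finite combination of) relations conforming to Definition~\ref{def::qualitative}, each of which is a tight Clark-normal-form formula by Proposition~\ref{prop::qual_clark}, and therefore each is definable in ASPMT(QS). The detailed polynomial formulas would be deferred to the appendix, consistent with the remark preceding the proposition.
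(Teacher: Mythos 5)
Your proposal is correct and follows essentially the same route that the paper's framework dictates for all of Propositions~\ref{prop::IA}--\ref{prop::CDC}: reduce each RCC--5 relation to finitely many polynomial (in)equalities over the vertex coordinates (parthood of a convex polygon in a convex polygon via per-vertex ``left of''/half-plane tests, discreteness via separating edges, and $PO$, $PP$, $PPi$, $EQ$ as Boolean combinations of these), so that each relation conforms to Definition~\ref{def::qualitative} and hence, by Proposition~\ref{prop::qual_clark}, is a tight Clark-normal-form formula whose completion is definable in ASPMT(QS). Your explicit treatment of disjunction and negation---negating a polynomial inequality yields another polynomial inequality, converting to a finite DNF over the boundedly many edge/vertex tests, and encoding one conjunctive rule per cell whose bodies are disjoined at Clark completion---is exactly the move needed to stay within the conjunctive-antecedent syntax of Definition~\ref{def::qualitative}, and your placing of the argument's weight on the fixed finite number of vertices matches the role that hypothesis plays in the paper's statement.
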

\begin{proposition}\label{prop::CDC}
Each relation of Cardinal Direction Calculus (CDC) \cite{frank1991qualitative} may be defined in ASPMT(QS).
\end{proposition}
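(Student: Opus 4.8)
The plan is to instantiate Definition~\ref{def::qualitative} directly: for each CDC relation I will exhibit a conjunction of polynomial constraints over the coordinate parametric functions whose satisfaction is equivalent to the relation holding, and then appeal to Proposition~\ref{prop::qual_clark} to pass from that defining formula to a tight formula in Clark normal form. Since the CDC of \cite{frank1991qualitative} is point-based, I fix a reference point $b$ and a primary point $a$ together with the parametric functions $x:Points \to \mathbb{R}$ and $y:Points \to \mathbb{R}$ returning their Cartesian coordinates, and I carry out the argument once for the projection-based model and once for the cone-based model.

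First I would treat the projection-based model, where the plane is split by the horizontal and vertical lines through $b$ and each base relation results from independently comparing the two $x$-coordinates and the two $y$-coordinates. Thus $NE$ is captured by $x(a) > x(b) \land y(a) > y(b)$, the relation $N$ by $x(a) = x(b) \land y(a) > y(b)$, the identity relation by $x(a) = x(b) \land y(a) = y(b)$, and symmetrically for the remaining six cases. Every conjunct is a linear — hence polynomial — equation or strict inequality over the parametric functions, so each of the nine relations already has exactly the syntactic shape of the implication in Definition~\ref{def::qualitative}, with $m = 2$.

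Next I would handle the cone-based model of the same calculus, where the dividing lines through $b$ are the diagonals (or, in the eight-sector variant, the half-lines at odd multiples of $22\mbox{.}5^\circ$) rather than the axis-parallel lines. The only change is that the defining conditions now compare the two coordinate differences against one another, with slopes fixed by the cone boundaries; for instance the four-sector relation $N$ becomes $y(a) - y(b) > |x(a) - x(b)|$, which I unfold into the conjunction $y(a) - y(b) > x(a) - x(b) \land y(a) - y(b) > x(b) - x(a)$ of two linear inequalities, and analogously for the other sectors against their bounding half-lines (whose slopes, such as $1 + \sqrt{2}$, are algebraic and so still yield polynomial inequalities). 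Again every relation is defined by a conjunction of polynomial constraints and thus meets the form required by Definition~\ref{def::qualitative}.

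The step needing care — rather than a genuine obstacle — is checking that the chosen constraints are jointly exhaustive and pairwise disjoint (the JEPD property expected of a qualitative calculus), which reduces to correct bookkeeping of strict versus non-strict inequalities on the dividing lines so that each geometric configuration falls under exactly one relation. Once the constraints are fixed, each relation is a conjunction of atomic polynomial (in)equalities over the parametric functions, and therefore, by Proposition~\ref{prop::qual_clark}, is representable as a tight formula in Clark normal form; this establishes that every CDC relation is definable in ASPMT(QS). I would close with a remark that the region-based refinement of CDC fits the same scheme once the extremal coordinates of a region's minimum bounding rectangle are expressed as finite conjunctions of coordinate comparisons over its fixed vertex set; the sole structural subtlety there is that multi-tile relations are disjunctions of such conjunctions, which is the one place where the single-conjunction template of Definition~\ref{def::qualitative} must be read as a disjunction of admissible defining formulas.
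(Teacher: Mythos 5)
Your proof is correct and follows essentially the same route as the paper's: each of Frank's (projection-based) CDC relations is written as a conjunction of linear, hence polynomial, equations and inequalities on the point coordinates, which is precisely the form required by Definition~\ref{def::qualitative}, and Proposition~\ref{prop::qual_clark} then yields the tight Clark-normal-form representation. Your extra treatment of the cone-based and region-based variants goes beyond what is needed; the only soft spot there is the claim that slopes like $1+\sqrt{2}$ ``still yield polynomial inequalities'' --- strictly one must introduce an auxiliary constant $s$ with $s^2 = 2 \wedge s > 0$ (or eliminate the radical by a sign-split squaring argument), since irrational coefficients are not themselves terms of the polynomial language --- but this does not affect the core argument, which stands on the projection-based model alone.
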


\subsection{Polynomial Semantics for \QS}\label{sec:poly-qs}
\begin{wrapfigure}{r}{0.35\textwidth}
  \begin{center}
  \vspace{-25pt}
\includegraphics[width=0.2\textwidth]{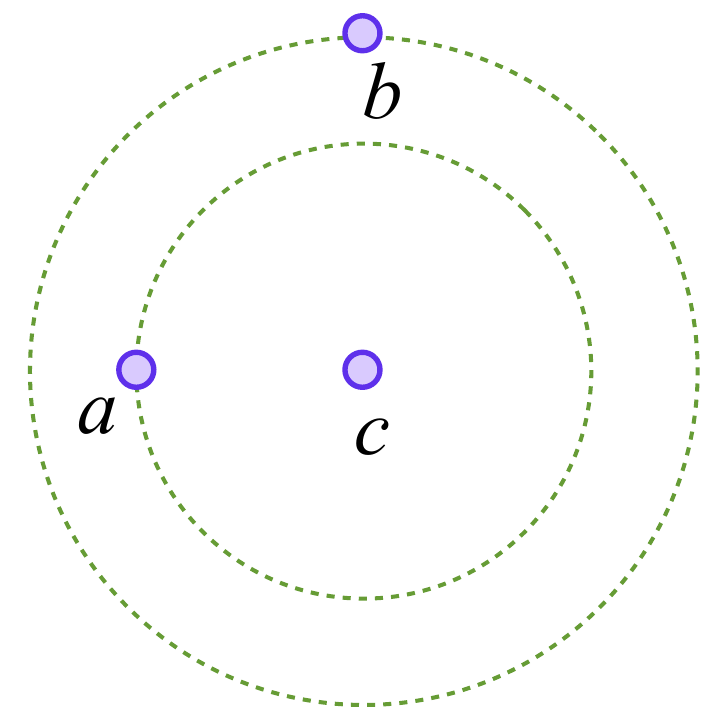}
  \end{center}
    \vspace{-15pt}
\caption{$a$ is \emph{nearer than} $b$ with respect to $c$.}\label{fig:dist-example}
  \vspace{-10pt}
\end{wrapfigure}

Analytic geometry gives us a general, sound, and complete way of working with spatial relations.\footnote{Tarski famously proved that the theory of real-closed fields is decidable via quantifier elimination (see \cite{collins1975quantifier,arnon1984cylindrical,Collins1991299} for an overview and algorithms); i.e., in a finite amount of time we can determine the consistency (or inconsistency) of any formula consisting of quantifiers ($\forall$, $\exists$) over the reals, and polynomial equations and inequalities combined using logical connectors ($\wedge$, $\vee$, $\neg$). Thus, by encoding spatial relations as systems of polynomial constraints (i.e., analytic geometry) we can employ polynomial constraint solving methods that are guaranteed to determine (in)consistency, giving us sound and complete spatial reasoning.} Analytic geometry can be applied to encode the semantics of high-level qualitative spatial relations using systems of real polynomial constraints; determining spatial consistency is then equivalent to determining satisfiability of these polynomial constraints. Given polynomial constraints over a set of real variables X, the constraints are satisfiable if there exists some real value for each variable in X such that all the polynomial constraints are simultaneously satisfied. As an example, consider the definition of a relative qualitative distance relation such as \emph{nearer than}: point $a$ is \emph{nearer than} $b$ with respect to a reference point $c$ if the distance between $a,c$ is less than the distance between $b,c$, denoted $\Rel{nearer\_than}{a,b,c}$ (Figure \ref{fig:dist-example}): 
$$ (a_x - c_x)^2 + (a_y - c_y)^2 < (b_x - c_x)^2 + (b_y - c_y)^2 .$$

Similarly, the relation of point $p_3$ being \emph{left of} segment $s_{p_1 p_2}$ is encoded as the following polynomial constraint \cite{bhatt2011clp}:
$$ p_3 ~ \text{\emph{left of}} ~ s_{p_1 p_2} \equiv_{def} (x_2 - x_1) (y_3 - y_1) > (y_2 - y_1) (x_3 - x_1) $$

If there exists an assignment of real values to the variables (e.g., $a_x = 3, c_x = 10.5$ in the nearness relation) that satisfies the polynomial equations and inequalities, then the qualitative spatial relations are consistent. Continuing with the example of relative nearness relations, consider a qualitative spatial description with the relations: $\Rel{nearer\_than}{a,b,c}$, $\Rel{nearer\_than}{b,a,c}$. This is encoded in the following polynomial constraints:
$$ (a_x - c_x)^2 + (a_y - c_y)^2 < (b_x - c_x)^2 + (b_y - c_y)^2, $$
$$(a_x - c_x)^2 + (a_y - c_y)^2 > (b_x - c_x)^2 + (b_y - c_y)^2, $$

which can be reformulated as
$$d_{ac} < d_{bc}, ~~d_{ac} > d_{bc}\textrm{.}$$

As the real value $d_{ac}$ cannot be both greater and smaller than $d_{bc}$, this system of polynomial constraints is inconsistent, and no configuration of points (within a Euclidean space) exists that can satisfy this set of qualitative spatial constraints. 

A range of qualitative spatial relations can be similarly encoded in the form of polynomial constraints, e.g., see Table \ref{tab:rcc-encodings} and Figure \ref{fig:rcc} for encondings for RCC relations\footnote{The region connection calculus (RCC) is a spatial logic of topological relations between regions \cite{randell1992spatial}. The theory is based on a single \emph{connects} predicate $C(x,y)$ interpreted as the topological closures of regions $x$ and $y$ having at least one point in common (i.e., the regions \emph{touch} at their boundaries or their interiors \emph{overlap}).} with corresponding ASPMT encodings in Appendix C. 

\begin{table}[htp]
 \caption{{\small Polynomial encodings of RCC relations between circles $c_1,c_2$ (omitting inverses), where $x_i = x(c_i)$, $y_i = y(c_i)$, $r_i = r(c_i)$, and $\Delta(c_1, c_2) = (x_1 - x_2)^2+(y_1 - y_2)^2$.}}
\centering
\scriptsize
\begin{tabular}{p{50 ex}lll}
\hline
\hline
\textbf{\footnotesize RCC Relation} & \textbf{\footnotesize Polynomial Encoding} \\
\hline
contact (C)  & $\Delta(c_1, c_2) \leq (r_1 + r_2)^2$ \\
discrete from (DR)  & $\Delta(c_1, c_2) \geq (r_1 + r_2)^2$ \\
disconnects (DC)  & $\Delta(c_1, c_2) > (r_1 + r_2)^2$ \\
externally connects (EC)  & $\Delta(c_1, c_2) = (r_1 + r_2)^2$ \\
overlaps (O)  & $\Delta(c_1, c_2) < (r_1 + r_2)^2$ \\
partially overlaps (PO)  & $(r_1 - r_2)^2 < \Delta(c_1, c_2) < (r_1 + r_2)^2$ \\
part of (P)  & $\Delta(c_1, c_2) \leq (r_1 - r_2)^2 \wedge (r_1 \leq r_2)$ \\
proper part of (PP)  & $\Delta(c_1, c_2) \leq (r_1 - r_2)^2 \wedge (r_1 < r_2)$ \\
tangential proper part (TPP)  & $\Delta(c_1, c_2) = (r_1 - r_2)^2 \wedge (r_1 < r_2)$ \\
nontangential proper part (NTPP)  & $\Delta(c_1, c_2) < (r_1 - r_2)^2 \wedge (r_1 < r_2)$ \\
equal (EQ)  & $x_1 = x_2 \wedge y_1 = y_2 \wedge r_1 = r_2$ \\
\hline
\hline
\end{tabular}
\label{tab:rcc-encodings}
\end{table}

\begin{wrapfigure}{r}{0.45\textwidth}
  \begin{center}
  \vspace{-20pt}
\includegraphics[width=0.35\textwidth]{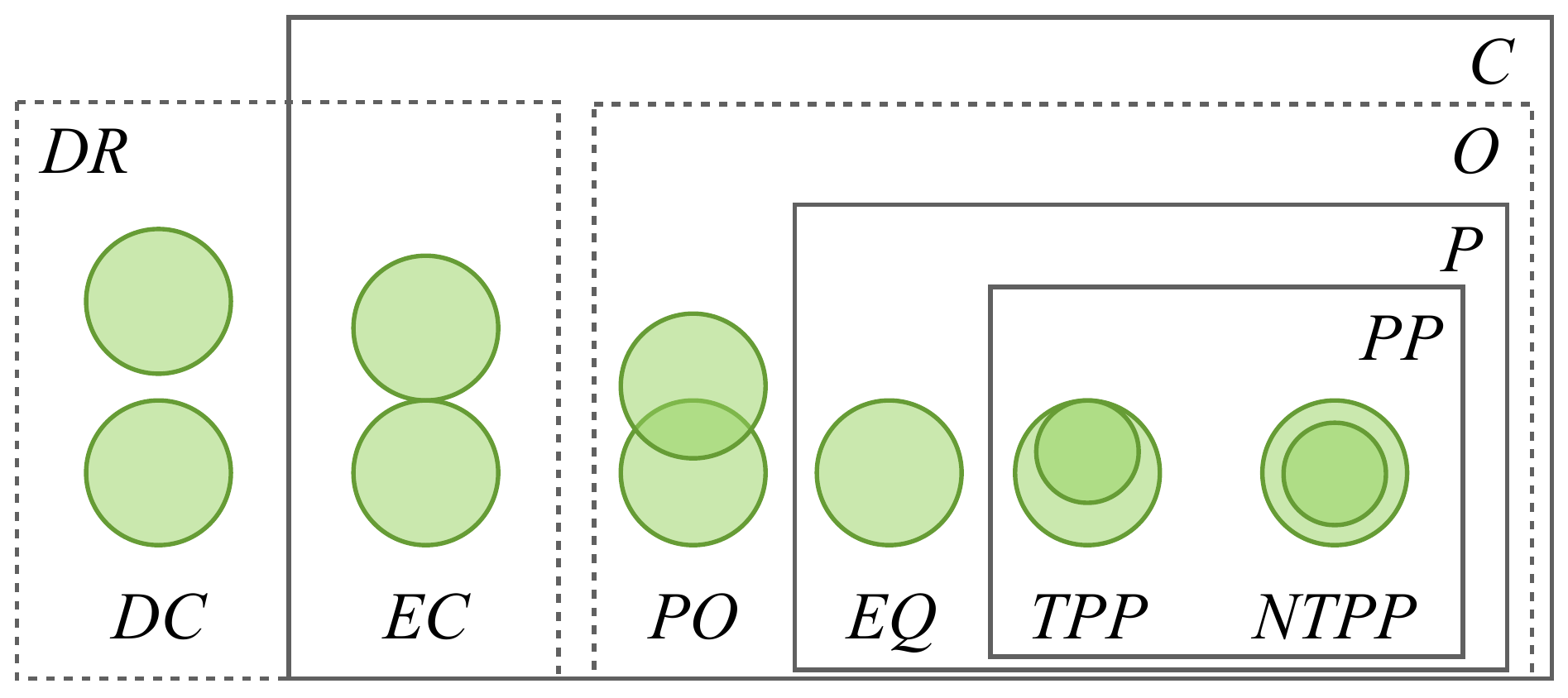}
  \end{center}
    \vspace{-12pt}
\caption{RCC relations between two circular regions.}
\label{fig:rcc}
\end{wrapfigure}
\vspace{-10pt}
Examples of specific polynomial encodings for topological and orientation relations can be found in \cite{pesant1994quad,bouhineau1996solving,pesant1999reasoning,bhatt2011clp,DBLP:conf/ecai/SchultzB14,DBLP:conf/cosit/SchultzB15} (also, Appendix H presents optimisations for encodings). Since ASPMT(QS) enables the use of polynomial equations and inequalities, a number of spatial relations known from qualitative calculi may be expressed (Propositions \ref{prop::IA}-\ref{prop::CDC}, and Appendix B).

\section{ASPMT(QS) -- System Overview and Implementation}\label{sec::implementation}

The implementation of ASPMT(QS) builds on \textsc{aspmt2smt} \cite{bartholomew2014system} --  a compiler translating a tight fragment of ASPMT into SMT instances. Our system consists of an additional module for spatial reasoning and \textsc{Z3} as the SMT solver. As our system operates on a tight fragment of ASPMT, input programs need to fulfil certain requirements, described in the following section. As output, our system either produces the stable models of the input programs, or states that no such model exists.

\subsection{Syntax of Input Programs}

The input program to our system needs to be $f$-$plain$ to use Theorem 1 from \cite{bartholomew2012stable}.

\begin{definition}[\textbf{$\textbf{f-plain}$ formula}]
Let $f$ be a function constant. A first--order formula is called $f$-$plain$ if each atomic formula:
\begin{itemize}
\item does not contain $f$, or
\item is of the form $f(\textbf{t}) = u$, where $\textbf{t}$ is a tuple of terms not containing $f$, and $u$ is a term not containing $f$.
\end{itemize}
\end{definition}

Additionally, the input program needs to be \emph{av-separated}, i.e. no variable occurring in an argument of an uninterpreted function is related to the value variable of another uninterpreted function via equality \cite{bartholomew2014system}. The input program is divided into declarations of:
\begin{itemize}
\item $\Pred{sorts}$ (data types),
\item $\Pred{objects}$ (particular elements of given types),
\item $\Pred{constants}$ (functions),
\item $\Pred{variables}$ (variables associated with declared types).
\end{itemize}
The second part of the program consists of clauses. ASPMT(QS) supports:
\begin{itemize}
\item connectives: $\Pred{\&}$, $\Pred{|}$, $\Pred{not}$, $\Pred{->}$, $\Pred{<-}$, 
\item arithmetic operators: \texttt{<}, \texttt{<=}, \texttt{>=}, \texttt{>}, \texttt{=}, \texttt{!=}, \texttt{+}, \texttt{=}, \texttt{*}, 
\end{itemize}
with their usual meaning. Additionally, ASPMT(QS) supports the following as native / first-class entities:

\begin{itemize}
\item basic spatial objects types, e.g., $\Pred{point}$, $\Pred{segment}$, $\Pred{circle}$, $\Pred{triangle}$;
\item parametric functions describing objects parameters, e.g., $x(\Pred{point})$, $r(\Pred{circle})$;
\item qualitative relations, e.g., $\Pred{rccEC}(\Pred{circle},\Pred{circle})$, $\Pred{coincident}(\Pred{point},\Pred{circle})$.
\end{itemize}

The abovementioned qualitative spatial relations and parametric functions do not need to be defined in the program (they are predefined in our spatial reasoning module). Similarly, object types do not need to be defined. The user only needs to define names of objects and the type they belong to, e.g., ``a :: circle'', stands for an object ``a'' that is a circle. 

\noindent\bul\textbf{Example 1: combining topology and size}\quad  Consider a program describing three circles $a$, $b$, $c$ such that $a$ is discrete from $b$,  $b$ is discrete from $c$, and $a$ is a proper part of $c$, declared as follows:

\includegraphics[]{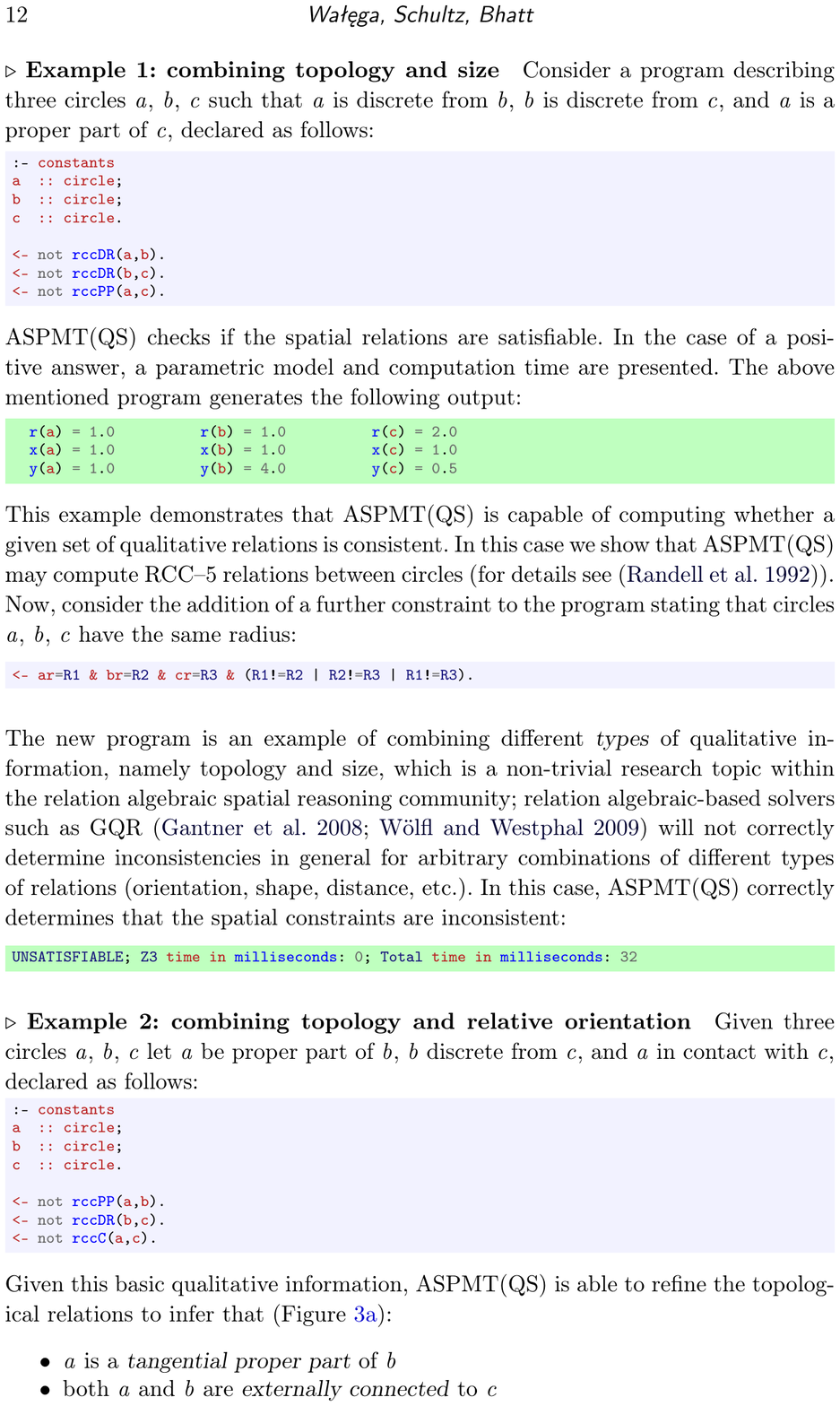}
%

\noindent ASPMT(QS) checks if the spatial relations are satisfiable. In the case of a positive answer, a parametric model and computation time are presented. The above mentioned program generates the following output:

\includegraphics[]{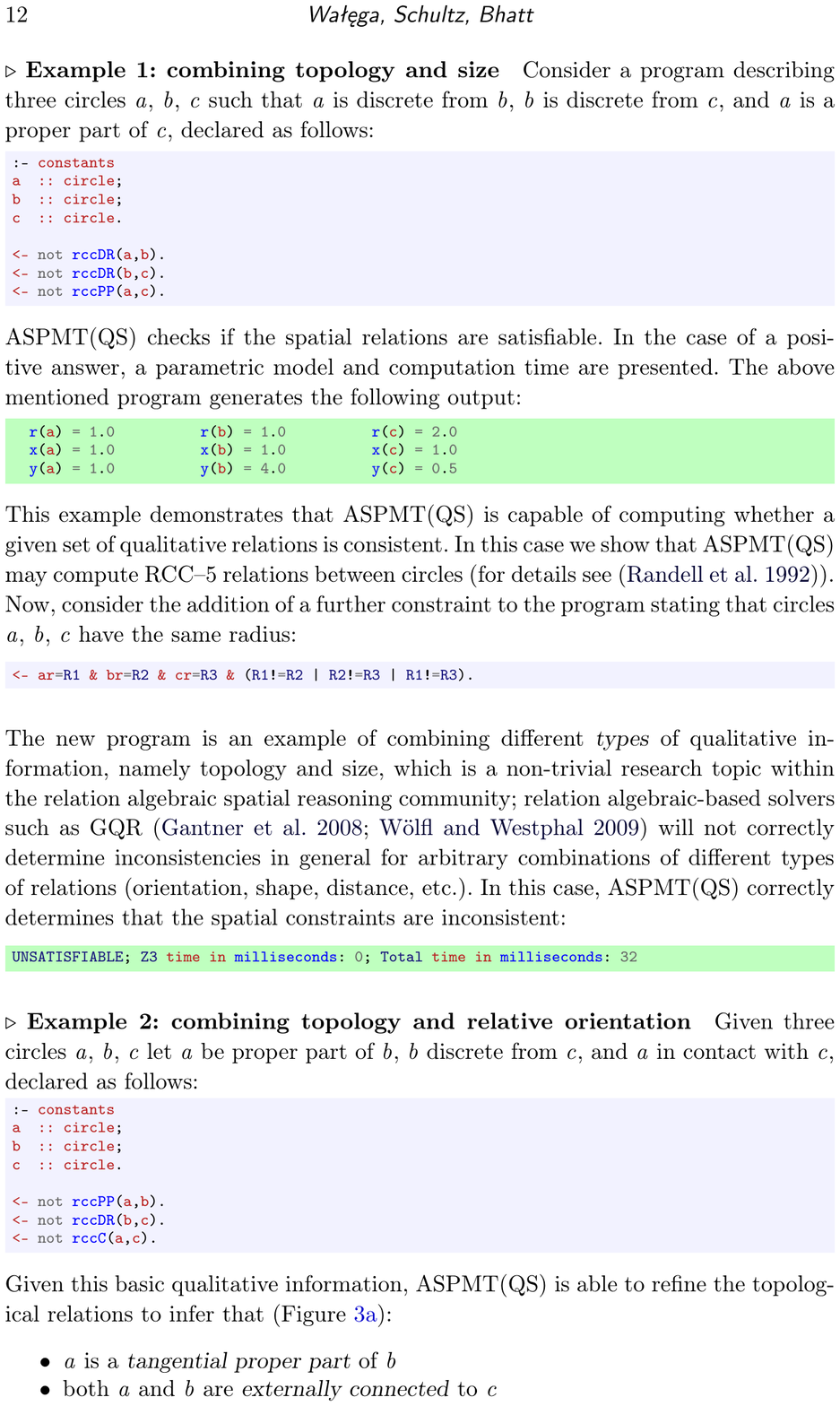}

\noindent This example demonstrates that ASPMT(QS) is capable of computing whether a given set of qualitative relations is consistent. In this case we show that ASPMT(QS) may compute RCC--5 relations between circles (for details see \cite{randell1992spatial}). Now, consider the addition of a further constraint to the program stating that circles $a$, $b$, $c$ have the same radius:

\includegraphics[]{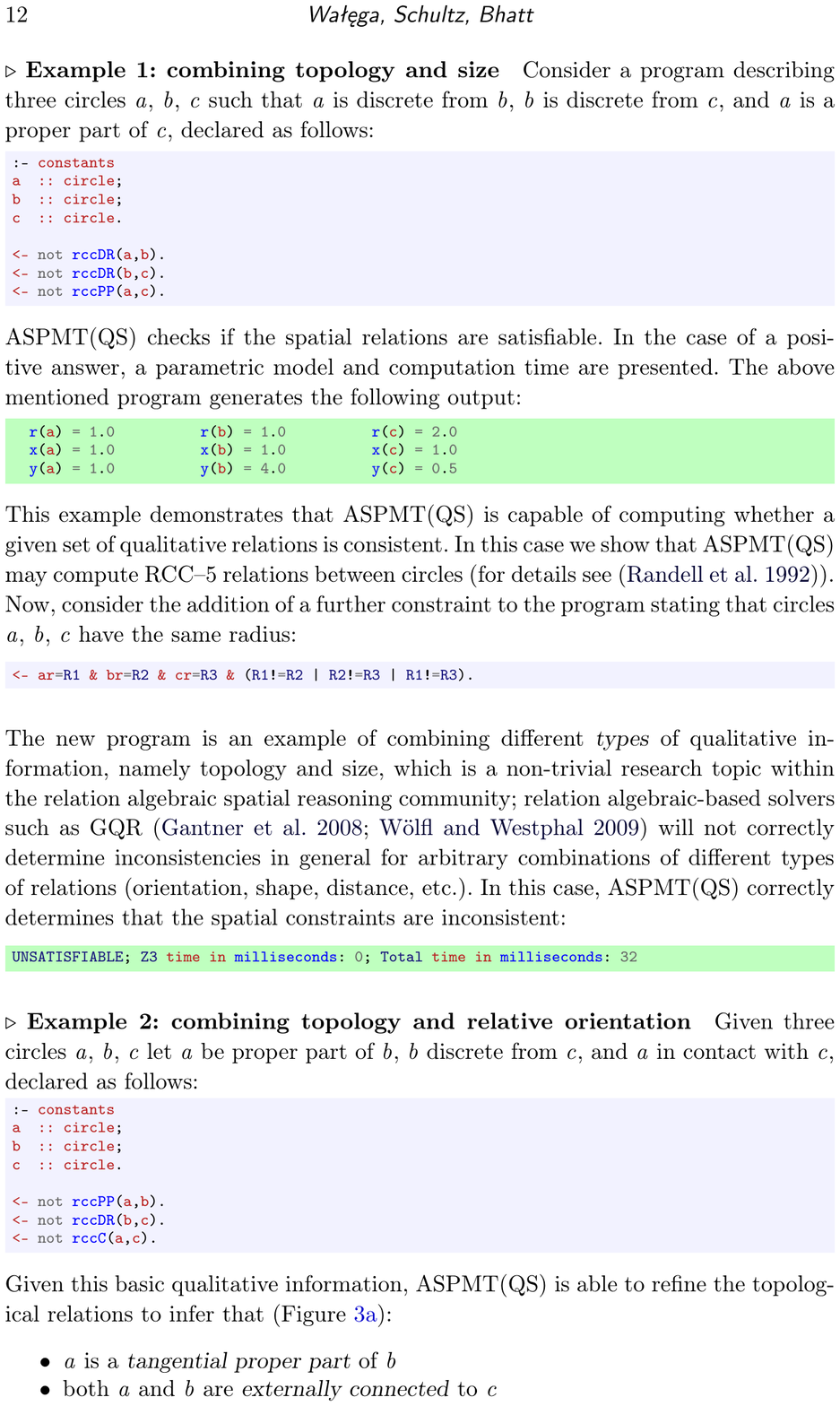}

\noindent The new program is an example of combining different \emph{types} of qualitative information, namely topology and size, which is a non-trivial research topic within the relation algebraic spatial reasoning community; relation algebraic-based solvers such as GQR \cite{gantner2008gqr,DBLP:conf/ijcai/WolflW09} will not correctly determine inconsistencies in general for arbitrary combinations of different types of relations (orientation, shape, distance, etc.). In this case, ASPMT(QS) correctly determines that the spatial constraints are inconsistent:

\includegraphics[]{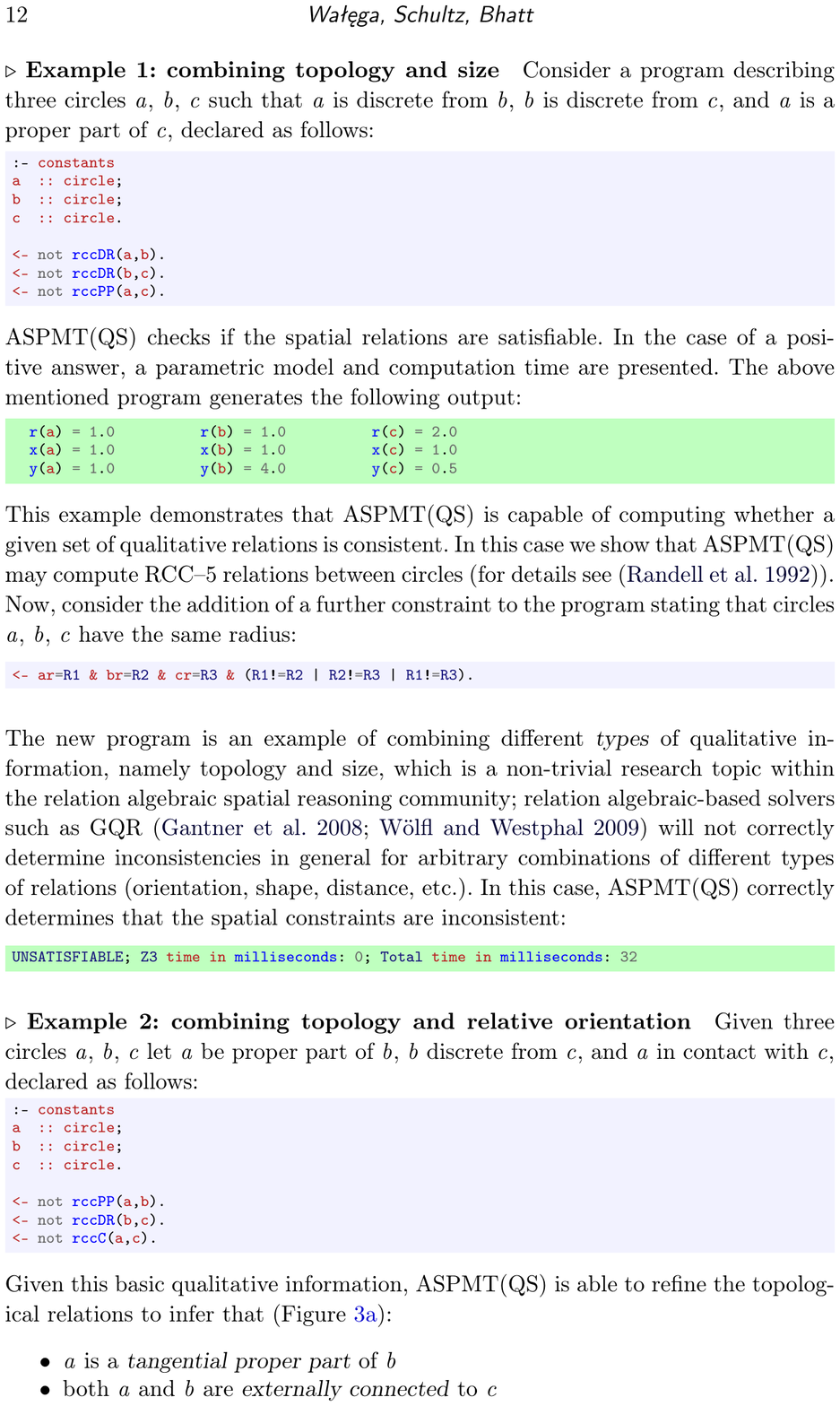}


\noindent\bul\textbf{Example 2: combining topology and relative orientation}\quad  Given three circles $a$, $b$, $c$ let $a$ be proper part of $b$,  $b$ discrete from $c$, and $a$ in contact with $c$, declared as follows: 

\includegraphics[]{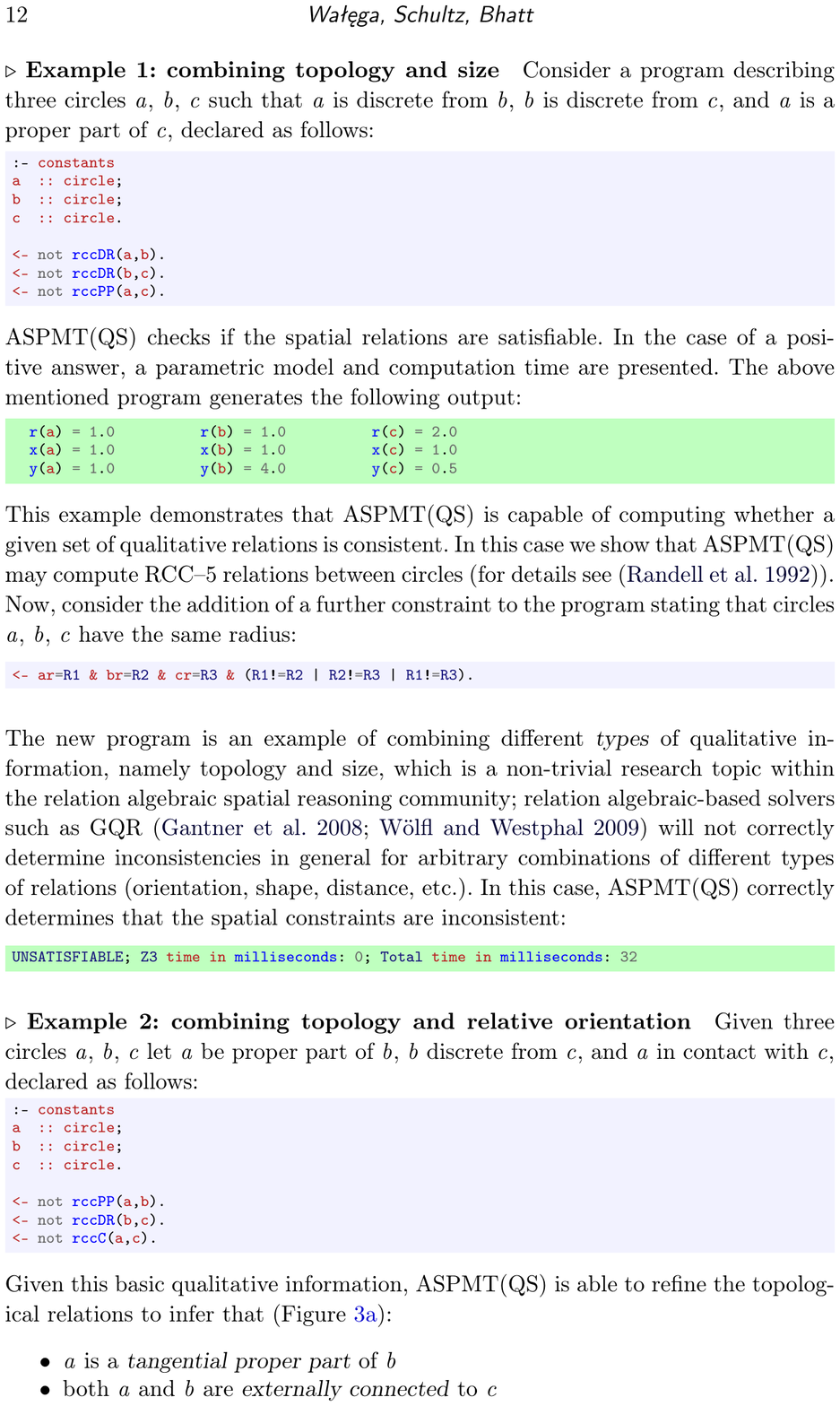}

%

\noindent Given this basic qualitative information, ASPMT(QS) is able to refine the topological relations to infer that (Figure~\ref{fig:topo-ori-1}):
\begin{itemize}
\item $a$ is a \emph{tangential proper part} of $b$
\item both $a$ and $b$ are \emph{externally connected} to $c$
\end{itemize}
as stated in the following output:

\includegraphics[]{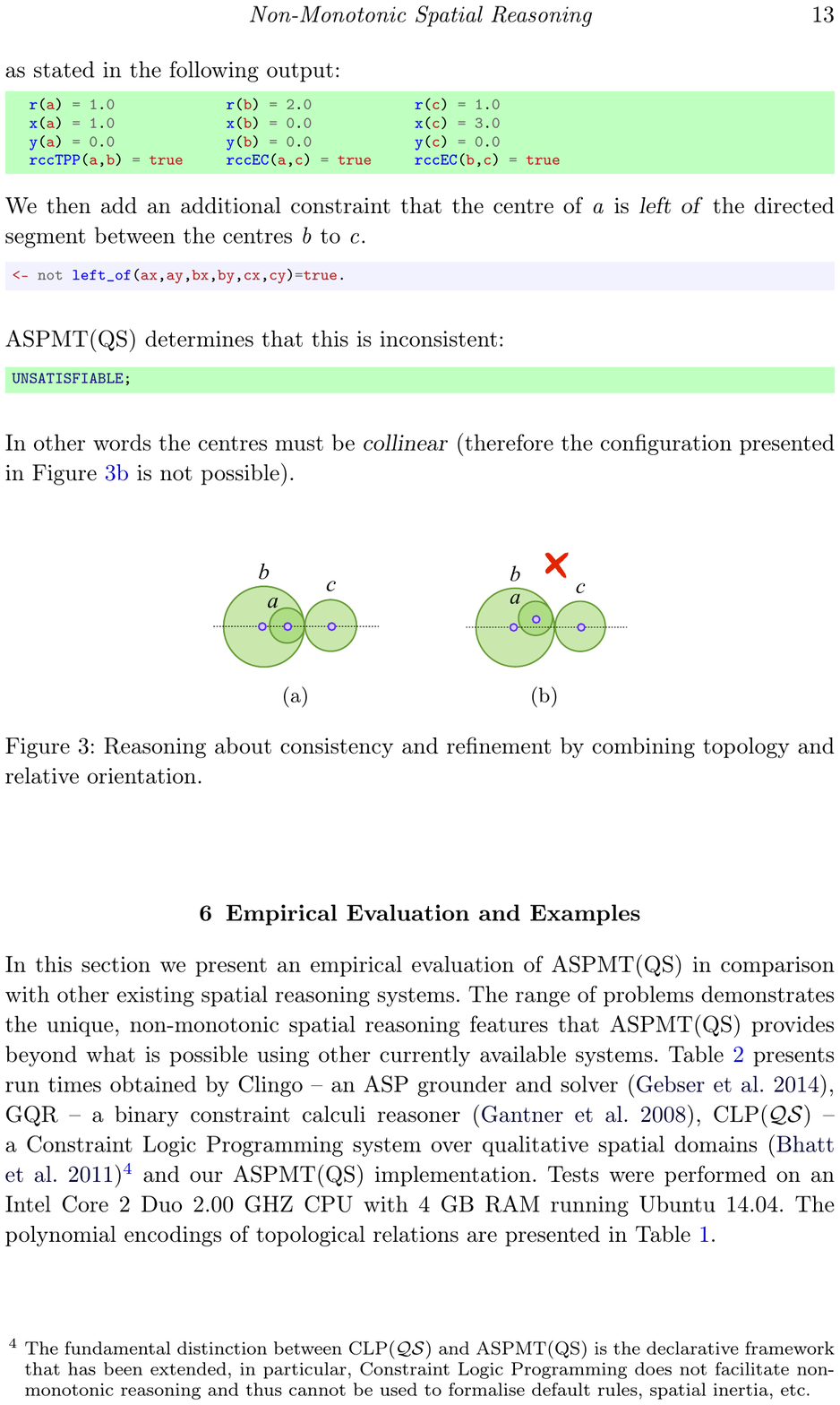}


\noindent We then add an additional constraint that the centre of $a$ is \emph{left of} the directed segment between the centres $b$ to $c$. 

\includegraphics[]{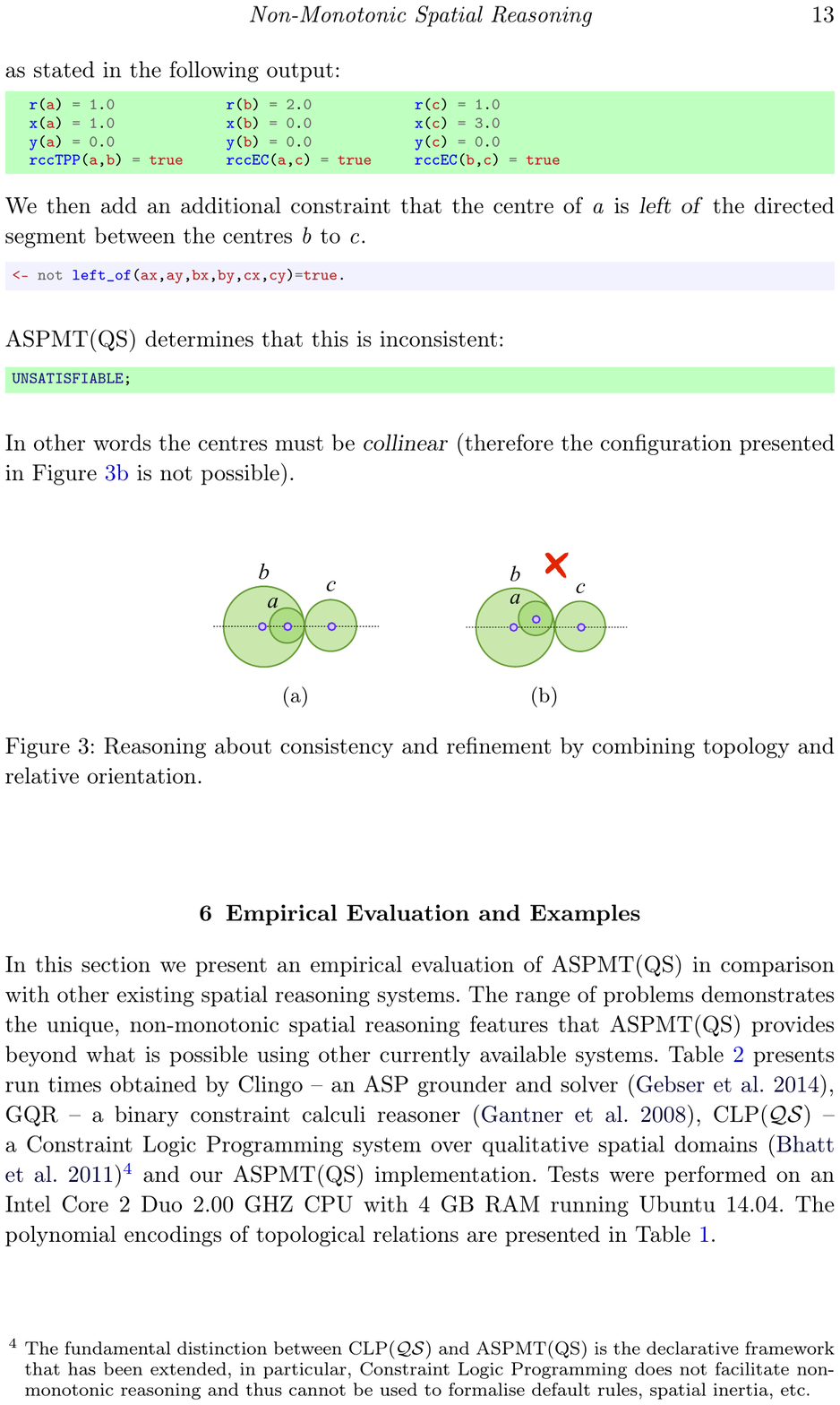}


\noindent ASPMT(QS) determines that this is inconsistent:

\includegraphics[]{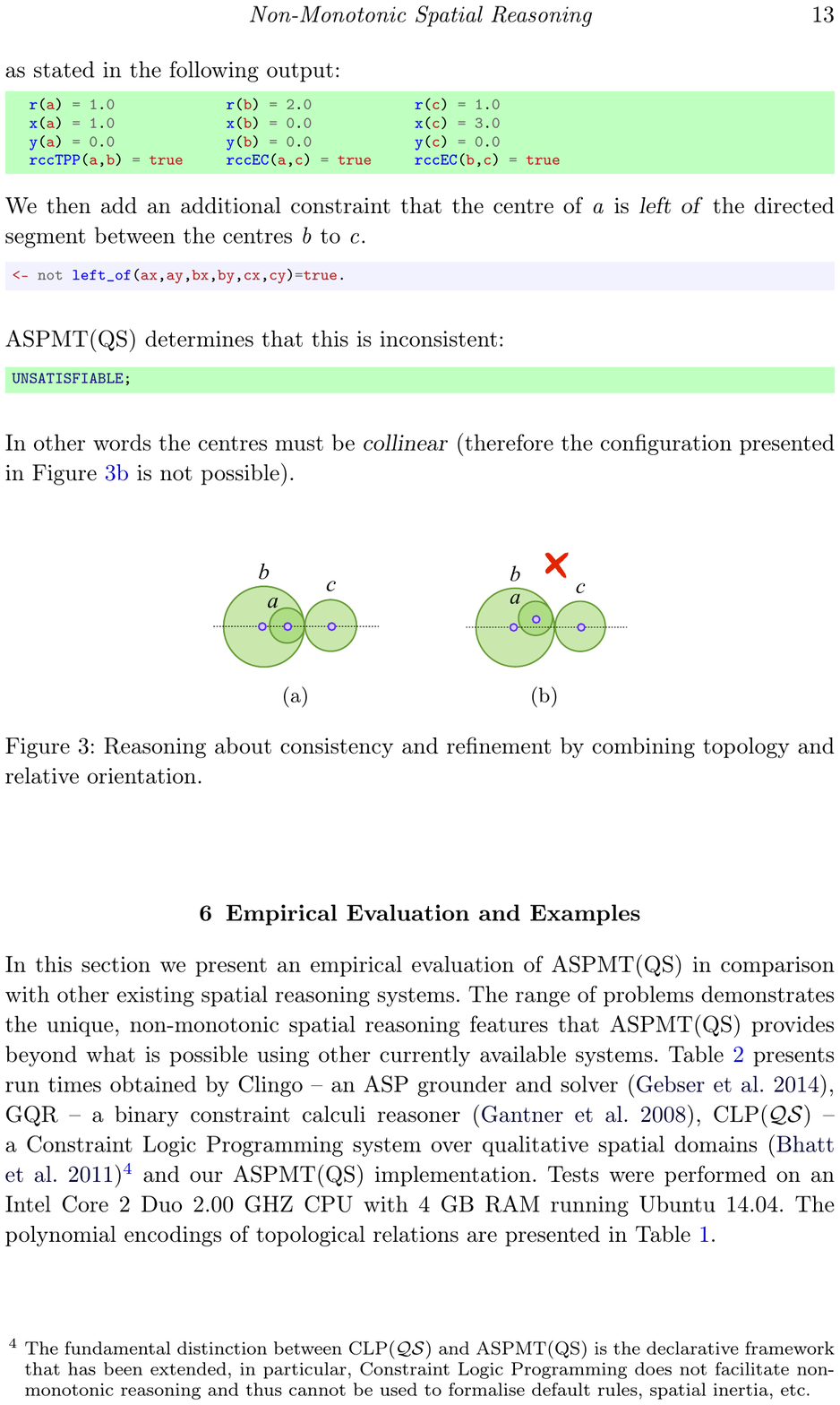}


\noindent In other words the centres must be \emph{collinear} (therefore the configuration presented in Figure \ref{fig:topo-ori-2} is not possible).

\begin{figure}[ht]
    \centering
    \begin{subfigure}[b]{0.3\textwidth}
        \centering
        \includegraphics[width=0.7\textwidth]{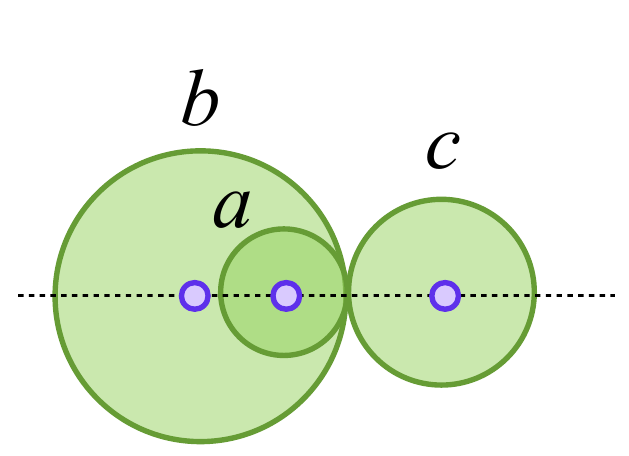}
        \caption{}
        \label{fig:topo-ori-1}
    \end{subfigure}%
    \begin{subfigure}[b]{0.3\textwidth}
        \centering
        \includegraphics[width=0.7\textwidth]{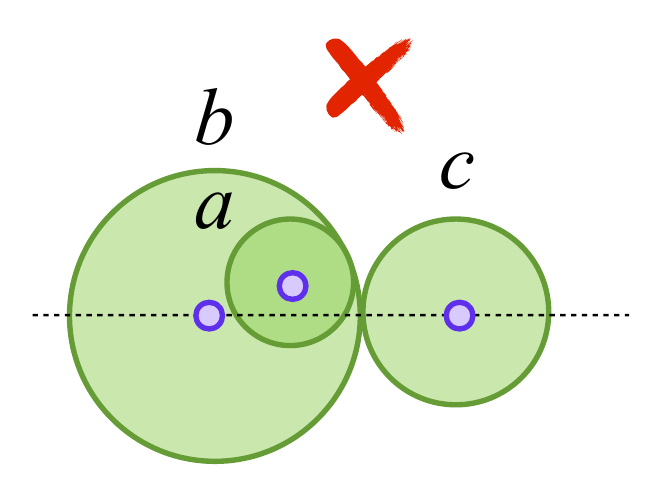}
        \caption{}
        \label{fig:topo-ori-2}
    \end{subfigure}
    \caption{Reasoning about consistency and refinement by combining topology and relative orientation.}
    \label{fig:topo-ori}
    \vspace{-10pt}
\end{figure}

\section{Empirical Evaluation and Examples} \label{sec::tests}

In this section we present an empirical evaluation of ASPMT(QS) in comparison with other existing spatial reasoning systems. The range of problems demonstrates the unique, non-monotonic spatial reasoning features that ASPMT(QS) provides beyond what is possible using other currently available systems.
Table~\ref{tab::cumulative} presents run times obtained by Clingo -- an ASP grounder and solver \cite{gebser2014clingo}, GQR -- a binary constraint calculi reasoner \cite{gantner2008gqr}, CLP($\mathcal{QS}$) -- a Constraint Logic Programming system over qualitative spatial domains \cite{bhatt2011clp}\footnote{The fundamental distinction between CLP($\mathcal{QS}$) and ASPMT(QS) is the declarative framework that has been extended, in particular, Constraint Logic Programming does not facilitate non-monotonic reasoning and thus cannot be used to formalise default rules, spatial inertia, etc.} and our ASPMT(QS) implementation. Tests were performed on an Intel Core 2 Duo 2.00 GHZ CPU with 4 GB RAM running Ubuntu 14.04. The polynomial encodings of topological relations are presented in Table~\ref{tab:rcc-encodings}.



\begin{table}[h]
  \vspace{-0pt}
\footnotesize
\begin{center}
\caption{\textit{\small Total time results (i.e., preprocessing and solving) of performed tests. ``---'' indicates that the problem can not be formalised, ``$^{I}$'' indicates that indirect effects can not be formalised, ``$^{D}$'' indicates that default rules can not be formalised.}
}
\label{tab::cumulative}
\begin{tabular}{crrrr}
\hline
\hline
Problem & \multicolumn{1}{c}{Clingo} & \multicolumn{1}{c}{GQR} & \multicolumn{1}{c}{CLP($\mathcal{QS}$)} & \multicolumn{1}{c}{ASPMT(QS)}  \\
\hline
\emph{Growth} & 0.004s$^{I}$ & 0.014s$^{I,D}$ & 1.623s$^{D}$ & 0.169s  \\
\emph{Motion}  & 0.004s$^{I}$ & 0.013s$^{I,D}$ &  0.449s$^{D}$  & 0.167s  \\
\emph{Attach I}  & 0.008s$^{I}$ & \multicolumn{1}{c}{---} & 3.139s$^{D}$  & 0.625s  \\
\emph{Attach II} & \multicolumn{1}{c}{---}  & \multicolumn{1}{c}{---} & 2.789s$^{D}$  & 0.268s  \\
\hline
\hline
\end{tabular}
\end{center}
  \vspace{-20pt}
\end{table}

\subsection{Ramification Problem}\label{sec::ramification}
The following two problems, \emph{Growth} and \emph{Motion}, were introduced in \cite{bhatt:aaai08}. Consider the initial situation $S_0$ presented in Figure~\ref{fig::ramifications}, consisting of three cells: $a$, $b$, $c$, such that $a$ is a non-tangential proper part of $b$: $\Pred{rccNTPP}(a,b,0)$, and $b$ is externally connected to $c$: $\Pred{rccEC}(b,c,0)$. 

\begin{figure}[ht]
  \begin{center}
\resizebox{0.8\textwidth}{!}{
\begin{tikzpicture}

\node[draw=none,fill=none] at (-2.5,0.7)  {$S_0:$};
\node[draw=none,fill=none] at (-2.5,-1.3)  {$S_1:$};

\node (0) at (1,0.7) [line width=1.2pt,rounded corners=2pt, draw,thick,minimum width=2cm,minimum height=1.4cm] {};
\node[draw=black, circle, minimum size=11mm, inner sep=0pt,outer sep=0] at (0.7,0.7) {$ $};
\node[draw=black, fill=LightBlue, circle, minimum size=5mm, inner sep=0pt,outer sep=0] at (0.7,0.8) {$a$};
\node[draw=black, fill=LightGreen, circle, minimum size=6mm, inner sep=0pt,outer sep=0] at (1.55,0.7) {$c$};
\node[draw=none,fill=none] at (0.7,0.35)  {$b$};

\node (1) at (-0.5,-1.3) [line width=1.2pt,rounded corners=2pt, draw,thick,minimum width=2cm,minimum height=1.4cm] {};
\node[draw=black, fill=LightBlue, circle, minimum size=11mm, inner sep=0pt,outer sep=0] at (-0.8,-1.3) {$a=b$};
\node[draw=black, fill=LightGreen, circle, minimum size=6mm, inner sep=0pt,outer sep=0] at (0.05,-1.3) {$c$};

\node (or1) at (2,-1.3) [line width=1.2pt,rounded corners=2pt, draw,thick,minimum width=2cm,minimum height=1.4cm] {};
\node[draw=black, circle, minimum size=11mm, inner sep=0pt,outer sep=0] at (1.7,-1.3) {};
\node[draw=black, fill=LightBlue, circle, minimum size=5mm, inner sep=0pt,outer sep=0] at (1.7,-1.0)  {$a$};
\node[draw=black, fill=LightGreen, circle, minimum size=6mm, inner sep=0pt,outer sep=0] at (2.55,-1.3) {$c$};
\node[draw=none,fill=none] at (1.7,-1.65)  {$b$};

\node (or2) at (4.5,-1.3) [line width=1.2pt,rounded corners=2pt, draw,thick,minimum width=2cm,minimum height=1.4cm] {};
\node[draw=black, circle, minimum size=11mm, inner sep=0pt,outer sep=0] at (4.2,-1.3) {};
\node[draw=black, fill=LightBlue, circle, minimum size=5mm, inner sep=0pt,outer sep=0] at (4.5,-1.3)  {$a$};
\node[draw=black, fill=LightGreen, circle, minimum size=6mm, inner sep=0pt,outer sep=0] at (5.05,-1.3) {$c$};
\node[draw=none,fill=none] at (3.9,-1.3)  {$b$};

\node[dashed, rounded corners=2pt, fit=(or1)(or2), draw] {};

\draw[->, line width=1.2pt, rounded corners=2pt] (0,0.7) -- (-0.75,0.7) -- (-0.75,-0.6);
\draw[ line width=1.2pt, rounded corners=2pt] (2,0.7) -- (3.25,0.7) -- (3.25,0.3);

\draw[->, line width=1.2pt, rounded corners=2pt] (3.25,0.3) -- (2.25,-0.4);
\draw[->, line width=1.2pt, rounded corners=2pt] (3.25,0.3) -- (4.25,-0.4);
\node[draw=none,fill=none] at (3.25,-0.1)  {OR};

\node at (-1.75,0.2) [] {$growth(a,0)$};
\node at (4.5,0.2) [] {$motion(a,0)$};







\end{tikzpicture}
}
\caption{Indirect effects of $growth(a,0)$ and $motion(a,0)$ events.}
\label{fig::ramifications}
\end{center}
\end{figure}
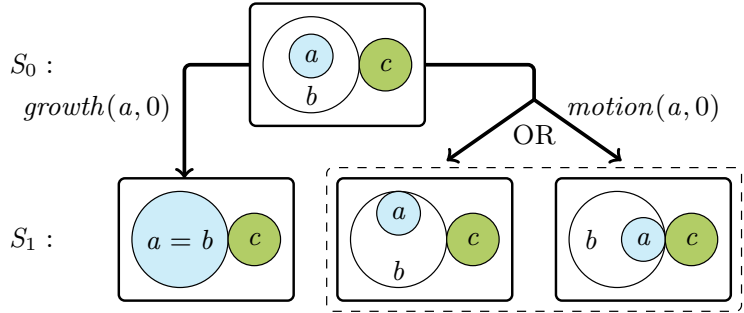






$\triangleright$ \emph{Growth.} Let $a$ grow in step $S_0$; the event $\Pred{growth}(a,0)$ occurs and leads to a successor situation $S_1$. The direct effect of $growth(a,0)$ is a change of a relation between $a$ and $b$ from $\Pred{rccNTPP}(a,b,0)$ to $\Pred{rccEQ}(a,b,1)$ (i.e., $a$ is equal to $b$). No change of the relation between $a$ and $c$ is directly stated, and thus we must derive the relation $\Pred{rccEC}(a,c,1)$ as an indirect effect. 

$\triangleright$ \emph{Motion.} Let $a$ move in step $S_0$; the event $\Pred{motion}(a,0)$ leads to a successor situation $S_1$. The direct effect is a change of the relation $\Pred{rccNTPP}(a,b,0)$ to $\Pred{rccTPP}(a,b,1)$ ($a$ is a tangential proper part of $b$). In the successor situation $S_1$ we must determine that the relation between $a$ and $c$ can only be either $\Pred{rccDC}(a,c,1)$ or $\Pred{rccEC}(a,c,1)$.

GQR provides no support for domain-specific reasoning, and thus we encoded the problem as two distinct qualitative constraint networks (one for each simulation step) and solved them independently, i.e., with no definition of \emph{growth} and \emph{motion}. Thus, GQR is not able to produce any additional information about indirect effects. As Clingo lacks any mechanism for analytic geometry, we implemented the RCC8 composition table and thus it inherits the incompleteness of relation algebraic reasoning. While CLP(QS) facilitates the modelling of domain rules such as \emph{growth}, there is no native support for default reasoning and thus we forced $b$ and $c$ to remain unchanged between simulation steps, otherwise all combinations of spatially consistent actions on $b$ and $c$ are produced without any preference (i.e., leading to the frame problem). 

In contrast, ASPMT(QS) can express spatial inertia, and derives indirect effects directly from spatial reasoning: in the \emph{Growth} problem we can use ASPMT(QS) to abduce that $a$ has to be concentric with $b$ in $S_0$ (otherwise a \emph{move} event would also need to occur).\footnote{That is, we can define a binary predicate $\Pred{concentric}$ between two circles that is \emph{true} when the centre points of the circles are equal,  and \emph{false} when they are not equal. ASPMT(QS) abduces that $\Pred{not concentric}$ is unsatisfiable, i.e. that the circles are necessarily concentric.} Checking global consistency of scenarios that contain interdependent spatial relations is a crucial feature that is enabled by the method of polynomial encodings and is provided only by CLP(QS) and ASPMT(QS).

\subsection{Geometric Reasoning and the Frame Problem}

In problems \emph{Attachment I} and \emph{Attachment II} the initial situation $S_0$ consists of three objects (circles), namely $\Const{car}$, $\Const{trailer}$ and $\Const{garage}$ as presented in Figure~\ref{fig::car}. Initially, the $\Const{trailer}$ is attached to the $\Const{car}$: $\Pred{rccEC}(\Const{car},\Const{trailer},0)$,  $\Pred{attached}(\Const{car},\Const{trailer},0)$. The successor situation $S_1$ is described by $\Pred{rccTPP}(\Const{car},\Const{garage},1)$. The task is to infer the possible relations between the trailer and the garage, and the necessary actions that would need to occur in each scenario.

There are two domain-specific actions: the car can move, $\Pred{move}(\Const{car},X)$, and the trailer can be detached, $\Pred{detach}(\Const{car},\Const{trailer},X)$ in simulation step $X$. Whenever the $\Const{trailer}$ is attached to the $\Const{car}$, they remain $\Pred{rccEC}$. The $\Const{car}$ and the $\Const{trailer}$ may be either completely outside or completely inside the $\Const{garage}$. 



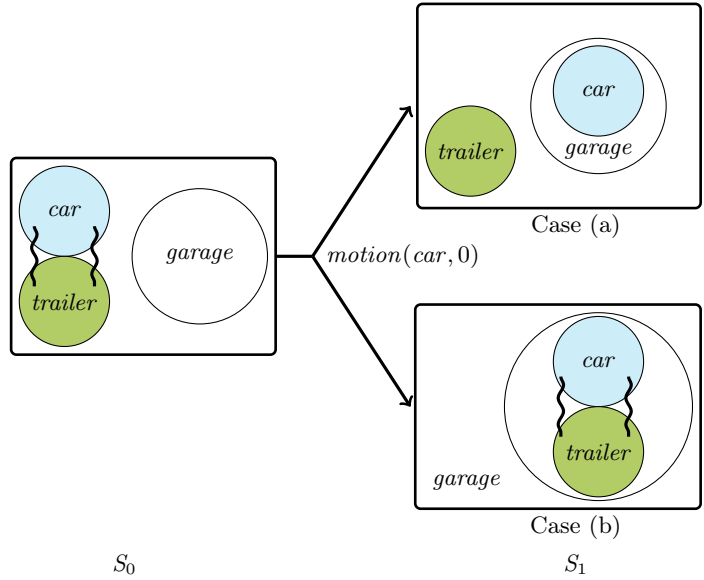
\begin{figure}[ht]
\begin{tikzpicture}


\node[draw=none,fill=none] at (1,-3.1)  {$S_0$};

\node (garage0) [draw=black, circle, minimum size=18mm, inner sep=0pt,outer sep=0] at (2,1) {$garage$};
\node (car0) [draw=black, fill=LightBlue, circle, minimum size=12mm, inner sep=0pt,outer sep=0] at (0.2,1.6) {$car$};
\node (trailer0) [draw=black, fill=LightGreen, circle, minimum size=12mm, inner sep=0pt,outer sep=0] at (0.2,0.4) {$trailer$};
\node[line width=1.0pt, rounded corners=2pt, fit=(garage0)(car0)(trailer0), draw] {};

\draw[-,very thick,decorate,decoration={snake,amplitude=1}] (-0.2,1.4) -- (-0.2,0.6);
\draw[-,very thick,decorate,decoration={snake,amplitude=1}] (0.6,1.4) -- (0.6,0.6);

\node[draw=none,fill=none] at (7,-3.1)  {$S_1$};

\node (garage3) [draw=white, circle, minimum size=25mm, inner sep=0pt,outer sep=0] at (7.3,3) {$ $};

\node (garage1) [draw=black, circle, minimum size=18mm, inner sep=0pt,outer sep=0] at (7.3,3) {$ $};
\node (car1) [draw=black, fill=LightBlue, circle, minimum size=12mm, inner sep=0pt,outer sep=0] at (7.3,3.2) {$car$};
\node (trailer1) [draw=black, fill=LightGreen, circle, minimum size=12mm, inner sep=0pt,outer sep=0] at (5.6,2.4) {$trailer$};
\node[line width=1.0pt, rounded corners=2pt, fit=(garage1)(car1)(trailer1)(garage3), draw] {};
\node[draw=none,fill=none] at (7.3,2.4)  {$garage$};


\node (garage1) [draw=black, circle, minimum size=25mm, inner sep=0pt,outer sep=0] at (7.3,-1) {$ $};
\node (car1) [draw=black, fill=LightBlue, circle, minimum size=12mm, inner sep=0pt,outer sep=0] at (7.3,-0.4) {$car$};
\node (trailer1) [draw=black, fill=LightGreen,  circle, minimum size=12mm, inner sep=0pt,outer sep=0] at (7.3,-1.6) {$trailer$};
\node (garage3) [draw=none,fill=none] at (5.55,-2.0) {$garage$};
\node[line width=1.0pt, rounded corners=2pt, fit=(garage1)(car1)(garage3)(trailer1), draw] {};

\draw[-,very thick,decorate,decoration={snake,amplitude=1}] (6.8,-0.6) -- (6.8,-1.4);
\draw[-,very thick,decorate,decoration={snake,amplitude=1}] (7.7,-0.6) -- (7.7,-1.4);

\draw[ line width=1.2pt, rounded corners=2pt] (3,1) -- (3.5,1);

\draw[->, line width=1.2pt, rounded corners=2pt] (3.5,1) -- (4.8,3);
\draw[->, line width=1.2pt, rounded corners=2pt] (3.5,1) -- (4.8,-1);

\node at (4.7,1) [] {$motion(car,0)$};

\node at (7,1.4) [] {Case (a)};
\node at (7,-2.6) [] {Case (b)};

\end{tikzpicture}
\caption{Non-monotonic reasoning with additional geometric information.}
\label{fig::car}
\end{figure}

$\triangleright$ \emph{Attachment I.} Given the available topological information, we must infer that there are two possible solutions (Figure.~\ref{fig::car}); (a) the $\Const{car}$ was detached from the $\Const{trailer}$ and then moved into the $\Const{garage}$:
(b) the $\Const{car}$, together with the $trailer$ attached to it, moved into the $\Const{garage}$:

\smallskip

$\triangleright$ \emph{Attachment II.} We are given additional geometric information about the objects' size: $r(car)=2$, $r(trailer)=2$ and $r(garage)=3$. Case (b) is now inconsistent, and we must determine that the only possible solution is (a).

These domain-specific rules require default reasoning: ``\emph{typically} the $\Const{trailer}$ remains in the same position'' and ``\emph{typically} the $\Const{trailer}$ remains attached to the $\Const{car}$''. The later default rule is formalised in ASPMT(QS) by means of the spatial default.
The formalisation of such rules addresses the frame problem. GQR is not capable of expressing the domain-specific rules for detachment and attachment in \emph{Attachment~I} and \emph{Attachment~II}. Neither GQR nor Clingo is capable of reasoning with a combination of topological and numerical information, as required in \emph{Attachment~II}. As CLP(QS) cannot express default rules, we can not capture the notion that, for example, the trailer should typically remain in the same position unless we have some explicit reason for determining that it moved; once again this leads to an exhaustive enumeration of all possible scenarios without being able to specify preferences, i.e. the frame problem, and thus CLP(QS) will not scale in larger scenarios.

The results of the empirical evaluation show that ASPMT(QS) is the only system that is capable of (a) non-monotonic spatial reasoning, (b) expressing domain-specific rules that also have spatial aspects, and (c) integrating both qualitative and numerical information. Regarding the greater execution times in comparison to CLP(QS), we have not yet fully integrated available optimisations (e.g., based on spatial symmetry-driven pruning strategies \cite{DBLP:conf/cosit/SchultzB15}) with respect to spatial reasoning; this is one of the directions of future work as the known performance gains are truly significant \cite{DBLP:conf/cosit/SchultzB15}.

\subsection{Abductive Reasoning}
We show how abductive reasoning may be achieved in ASPMT(QS). Consider an application where the spatial configuration of objects is recorded in discrete time points (e.g., geospatial information collected about cities or a dynamic environment observed by a mobile robot). We consider the situation presented in Figure \ref{fig::cells-nonmono}, where the following data about spatial relations between $a, b, c$ in time points  $t_1$, $t_2$, $t_3$ is available:

\includegraphics[]{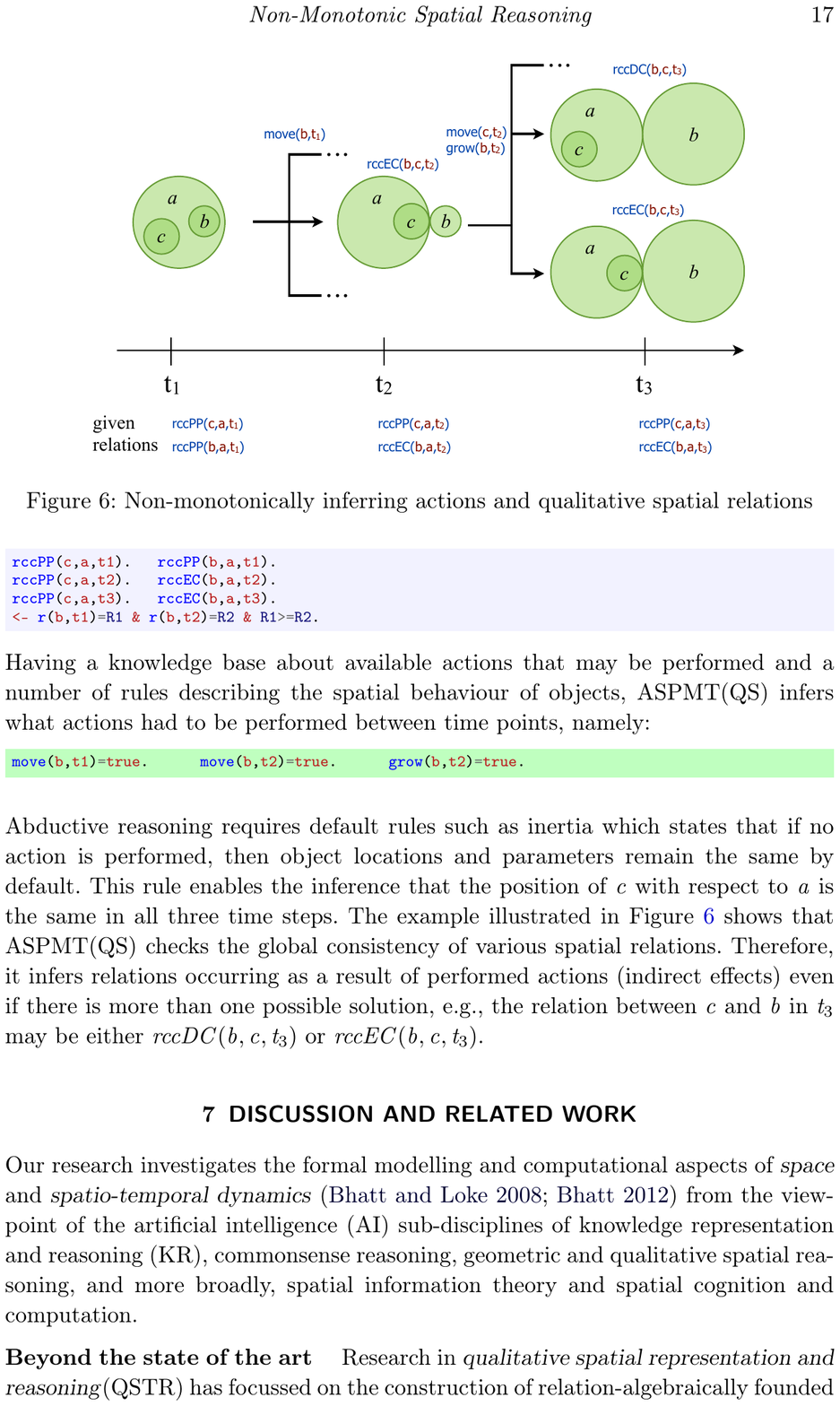}


\begin{figure}[t]
\centering
\includegraphics[width=0.8\textwidth]{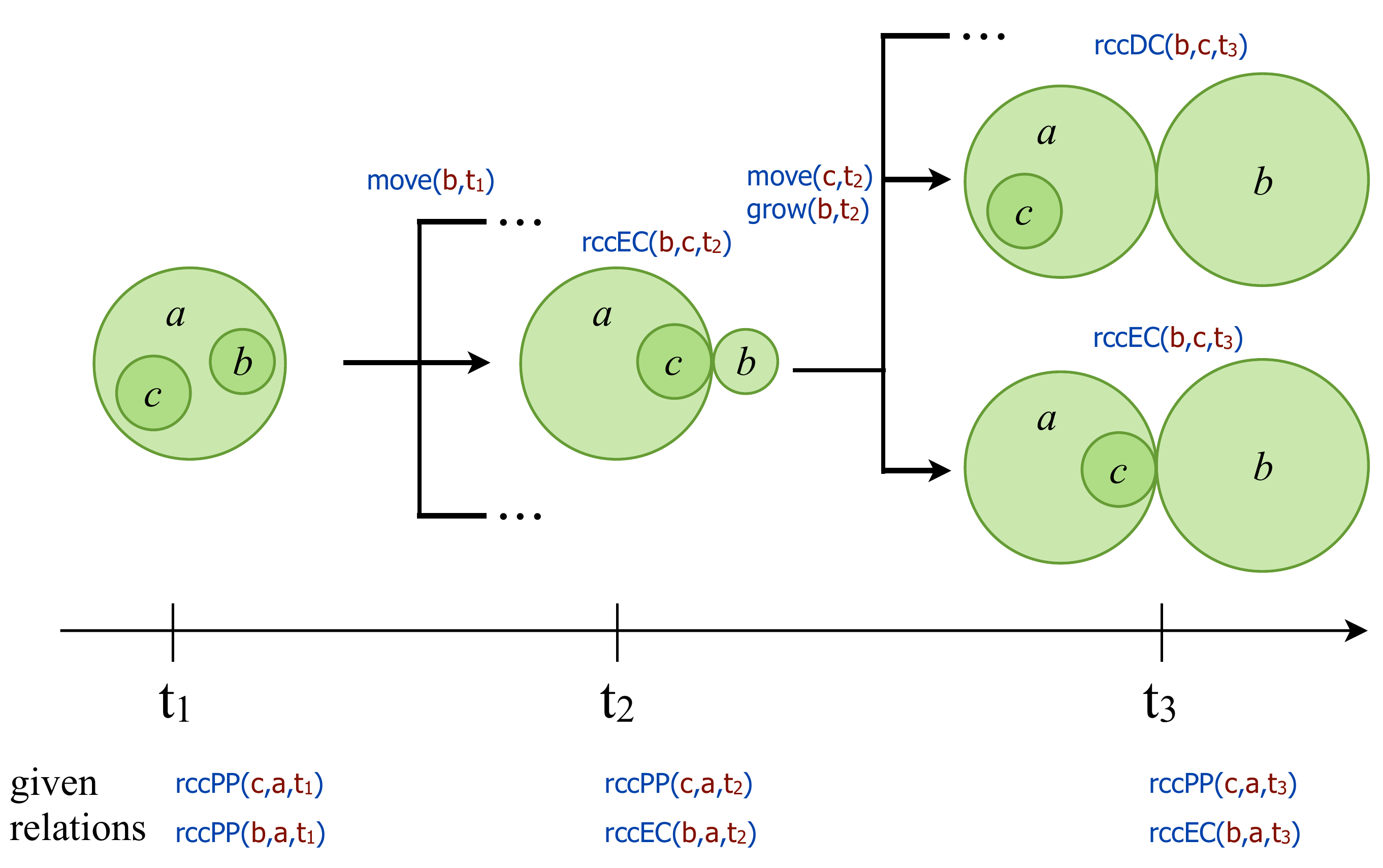}
\caption{Non-monotonically inferring actions and qualitative spatial relations}
\label{fig::cells-nonmono}
\end{figure}

\noindent Having a knowledge base about available actions that may be performed and a number of rules describing the spatial behaviour of objects, ASPMT(QS) infers what actions had to be performed between time points, namely:

\includegraphics[]{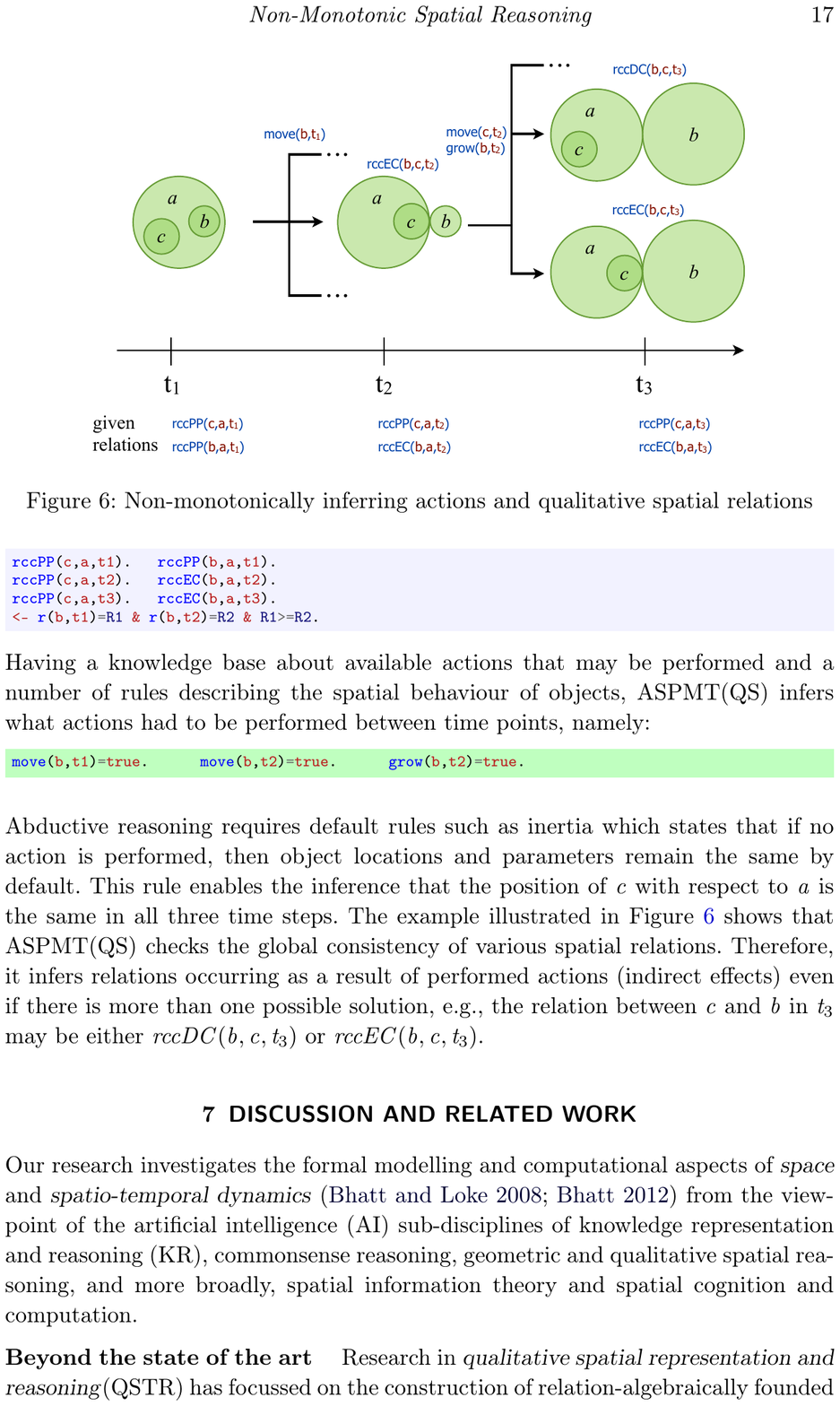}


Abductive reasoning requires default rules such as inertia which states that if no action is performed, then object locations and parameters remain the same by default. This rule enables the inference that the position of $c$ with respect to $a$ is the same in all three time steps. 
The example illustrated in Figure \ref{fig::cells-nonmono} shows that ASPMT(QS) checks the global consistency of various spatial relations. Therefore, it infers relations occurring as a result of performed actions (indirect effects) even if there is more than one possible solution, e.g., the relation between $c$ and $b$ in $t_3$ may be either $rccDC(b,c,t_3)$ or $rccEC(b,c,t_3)$.

\section{Discussion and Related Work}\label{sec:discussions-relwork}
Our research investigates the formal modelling and computational aspects of \emph{space} and \emph{spatio-temporal dynamics} \cite{bhatt:scc:08,Bhatt:RSAC:2012} from the viewpoint of the artificial intelligence (AI)  sub-disciplines of knowledge representation and reasoning (KR), commonsense reasoning, geometric and qualitative spatial reasoning, and more broadly, spatial information theory and spatial cognition and computation.

\smallskip

\noindent\mysubsecNoBook{{\color{black}Beyond the state of the art}}{XX}
Research in \emph{qualitative spatial representation and reasoning}(QSTR) has focussed on the construction of relation-algebraically founded qualitative spatial and temporal calculi, complexity results thereof, and formalisations of (topological) spatial logics. Furthermore, specialized \emph{relation-algebraically} founded methods and prototypical spatial reasoning tools -- \emph{working as black-box systems outside of any systematic KR method} -- have been a niche too. What is still missing in the field of spatial representation and reasoning is a modular, unifying KR framework of \emph{space, action, and change} \cite{Bhatt:RSAC:2012} that would seamlessly integrate with or be accessible  via general  KR languages and frameworks in AI, and be applicable in a wide-range of application domains. Primarily, our research is motivated by addressing this gap.

\smallskip

\noindent\mysubsecNoBook{{\color{black}KR based methods and tools addressing spatio-temporal dynamics}}{XX} 
With a focus on commonsense reasoning about space, action, and change, the {principal focus and long-term agenda of our research is to develop methods and generally usable tools} for visuo-spatial problem solving with spatio-temporal configurations and dynamic phenomena at the scale of everyday human perception, abstraction, interpretation, and interaction. This agenda is being driven by the application domains that are being pursued independently in the areas of \emph{spatio-temporal narrative interpretation and synthesis, spatial computing for architecture design, cognitive vision and robotics, and geospatial dynamics} \cite{Bhatt-Schultz-Freksa:2013}. This level of generality, and our emphasis on mixed qualitative-quantitative spatial reasoning in the context of state of the art KR methods, namely constraint logic and answer set programming paradigms, will inherently offer a robust and scalable representational and computational foundation for the class of application areas that inspire and guide the basic research questions reported in this paper.

\smallskip

%
%

\noindent\mysubsecNoBook{{\color{black}Related work on space and motion}}{XX}
\cite{DefSpaOccu:1995:Shanahan} describes a default reasoning problem, analogous to the classic frame problem, which arises when an attempt is made to construct a logic-based calculus for reasoning about the movement of objects in a real-valued co-ordinate system. \cite{DefSpaOccu:1995:Shanahan}'s all-encompassing theory alludes to a unification of spatial, temporal and causal aspects at representational and computational levels. The use of commonsense reasoning about the \emph{physical properties of objects} within a first-order logical framework has been investigated by \cite{davis:08-liquid,davis:09-box}. The aim here is to combine commonsense qualitative reasoning about ``continuous time, Euclidean space, commonsense dynamics of solid objects, and semantics of partially specified plans'' \cite{davis:09-box}. \cite{Cabalar2010} investigate the formalization of the commonsense representation that is necessary to solve \emph{spatial puzzles} involving non-trivial objects such as \emph{holes} and \emph{strings}. \cite{bhatt:scc:08} explicitly formalize a \emph{dynamic spatial systems} approach for the modelling of changing spatial domains using the Situation Calculus \cite{SitCalc:McCarthy:69}. A dynamic spatial system here is regarded as a specialization of the generic \emph{dynamic systems} approach \cite{dynamicSys:Sandewall:1994,KnowInAction:2001:Reiter} for the case where sets of qualitative spatial relationships (grounded in formal spatial calculi) undergo change as a result of actions and events in the system. 




\section{Conclusion and Outlook}\label{sec::conclusions}
We have presented ASPMT(QS),  a novel approach for reasoning about space and spatial change within a KR paradigm. By integrating dynamic spatial reasoning within a KR framework, namely answer set programming (modulo theories), our system can be used to model behaviour patterns that characterise high-level processes, events, and activities as identifiable with respect to a general characterisation of commonsense \emph{reasoning about space, actions, and change} \cite{Bhatt:RSAC:2012,bhatt:scc:08}. 

ASPMT(QS) is capable of sound and complete spatial reasoning, and combining qualitative and quantitative spatial information when reasoning non-monotonically; this is due to the approach of encoding spatial relations as polynomial constraints, and solving using SMT solvers with the theory of real nonlinear arithmetic. We have demonstrated that no other existing spatial reasoning system is capable of supporting the non-monotonic spatial reasoning features (e.g., spatial inertia, ramification, causal explanation) in the context of any systematic knowledge representation and reasoning method, be it a mainstream method such as answer set programming, or otherwise.

This work opens up several opportunities: the spatio-temporal ontology can be extended in many interesting ways with support for richer spatial relational as well as object domains. The polynomial encodings may be further optimised, and this is a topic that we have ourselves devoted considerable attention to \cite{DBLP:conf/cosit/SchultzB15} (Appendix H). Most interestingly, there exist many possibilities to build additional modules directly on top of ASPMT(QS) concerning aspects such as visibility, spatio-temporal motion patterns (based on space-time histories) etc from the viewpoint of applications in cognitive vision, cognitive robotics, eye-tracking or visual perception \cite{SuchanBS14-perceptualnarrative,ijcai2016-vision-eye-film,wacv2016-vision-eye-film}, and to provide support for specialized, domain-specific spatio-linguistic characterisations identifiable in a range of spatial assistance systems and assistive technologies encompassing (spatial) logic, (spatial) language, and (spatial) cognition \cite{Bhatt-Schultz-Freksa:2013}.

\subsubsection*{Acknowledgments.} 


We acknowledge the support provided by the Polish National Science Centre grant 2011/02/A/HS1/0039, and the DesignSpace Group (\url{www.designspace.org}).

\medskip
\medskip

{
\nocite{ijcai2016-language-suchan,ijcai2016-vision-eye-film,aspmtqs-lpnmr-2015}
\bibliography{aspmtqs,MehulBibs/NMSR,MehulBibs/ijcai15,MehulBibs/QSR-bib,MehulBibs/RAC-bib,MehulBibs/H4-External,MehulBibs/H4-Internal,MehulBibs/P3-External,MehulBibs/P3-Internal,MehulBibs/ASP}

}

\end{document}